% CVPR 2022 Paper Template
% based on the CVPR template provided by Ming-Ming Cheng (https://github.com/MCG-NKU/CVPR_Template)
% modified and extended by Stefan Roth (stefan.roth@NOSPAMtu-darmstadt.de)
% modified and extended by Andrea Tagliasacchi (taiya.github.io)
\pdfoutput=1
\documentclass[10pt,twocolumn,letterpaper]{article}

%%%%%%%%% PAPER TYPE  - PLEASE UPDATE FOR FINAL VERSION
% \usepackage[review]{cvpr}      % To produce the REVIEW version
 \usepackage{cvpr}               % To produce the CAMERA-READY version
%\usepackage[pagenumbers]{cvpr} % To force page numbers, e.g. for an arXiv version

% Include other packages here, before hyperref.
\usepackage{graphicx, epstopdf}
\usepackage{amsmath}
\usepackage{amssymb}
\usepackage{booktabs}
\usepackage{amsthm}
\usepackage{bbold}
\usepackage{mathtools}
\usepackage{algorithm}
\usepackage{algorithmic}
\usepackage{stackengine}
\usepackage[accsupp]{axessibility}  % Improves PDF readability for those with disabilities.

\def\delequal{\mathrel{\ensurestackMath{\stackon[1pt]{=}{\scriptstyle\Delta}}}}
\newtheorem{theorem}{Theorem}%[section]
\newtheorem{proposition}{Proposition}%[section]
\newtheorem{lemma}[theorem]{Lemma}

\floatstyle{ruled}
\newfloat{listing}{tb}{lst}{}
\floatname{listing}{Listing}

% It is strongly recommended to use hyperref, especially for the review version.
% hyperref with option pagebackref eases the reviewers' job.
% Please disable hyperref *only* if you encounter grave issues, e.g. with the
% file validation for the camera-ready version.
%
% If you comment hyperref and then uncomment it, you should delete
% ReviewTempalte.aux before re-running LaTeX.
% (Or just hit 'q' on the first LaTeX run, let it finish, and you
%  should be clear).
\usepackage[pagebackref,breaklinks,colorlinks]{hyperref}

% Support for easy cross-referencing
\usepackage[capitalize]{cleveref}
\crefname{section}{Sec.}{Secs.}
\Crefname{section}{Section}{Sections}
\Crefname{table}{Table}{Tables}
\crefname{table}{Tab.}{Tabs.}

%%%%%%%%% PAPER ID  - PLEASE UPDATE
 % *** Enter the CVPR Paper ID here

% This file contains all unofficial tweaks to the official template
% That is, main.tex is **minimially** changed (only \import added)

%%
%% additional package imports (never in the main.tex!!!)
%%
\usepackage{overpic}
\usepackage{enumitem} %< control spacing in itemize/enumerate/...
\usepackage{overpic} %< add raw math symbols to figures
\usepackage{color}
% \usepackage{microtype} %< hardcore text layout optimization (ONLY UPDATE ~DEADLINE)
% \usepackage{placeins} %< if you want to use FloatBarriers

%%
%% basic colors
%%
\definecolor{turquoise}{cmyk}{0.65,0,0.1,0.3}
\definecolor{purple}{rgb}{0.65,0,0.65}
\definecolor{dark_green}{rgb}{0, 0.5, 0}
\definecolor{orange}{rgb}{0.8, 0.6, 0.2}
\definecolor{red}{rgb}{0.8, 0.2, 0.2}
\definecolor{darkred}{rgb}{0.6, 0.1, 0.05}
\definecolor{blueish}{rgb}{0.0, 0.3, .6}
\definecolor{light_gray}{rgb}{0.7, 0.7, .7}
\definecolor{pink}{rgb}{1, 0, 1}
\definecolor{greyblue}{rgb}{0.25, 0.25, 1}

%%
%% basic TODOs
%%

%% 
%% Inlined comments/edits
%%
% --- Andrea Tagliasacchi (AT)
 %< I changed something and I want you to see it
 %< inlined comment for max visibility
 %< useful for ~deadline (no layout changes)
% --- Geoffrey Hinton (GE)

%% 
%% Circled numbers instead of itemize lists
%%
% i.e. instead of (1) phrase, (2) phrase, ..., and avoids name clash with `\eq{ref}` as (1)
% is often used for Eq.~(1)

%%
%% basic math symbols
%%

%%
%% shortcuts for standard references
%% 

%%
%% lorem (i.e. filler latin text)
%% 
\usepackage{blindtext}

%%
%% paragraph (fine tune spacing close to deadline)
%% 
\renewcommand{\paragraph}[1]{\vspace{1em}\noindent\textbf{#1}.}
\begin{document}
%\title{Submodularity Diversified Multiple Instance Learning for Robust Video Anomaly Detection}

\title{Bayesian Nonparametric Submodular Video Partition for Robust\\ Anomaly Detection}
\author{ Hitesh Sapkota $\quad$  Qi Yu \\
Rochester Institute of Technology\\
{\tt\small \{hxs1943, qi.yu\}@rit.edu}
}
\maketitle

\begin{abstract}
\vspace{-2mm}
   Multiple-instance learning (MIL) provides an effective way to tackle the video anomaly detection problem by modeling it as a weakly supervised problem as the labels are usually only available at the video level while missing for frames due to expensive labeling cost. We propose to conduct novel Bayesian non-parametric submodular video partition (BN-SVP) to significantly improve MIL model training that can offer a highly reliable solution for robust anomaly detection in practical settings that include outlier segments or multiple types of abnormal events. BN-SVP essentially performs dynamic non-parametric hierarchical clustering with an enhanced self-transition that groups segments in a video into temporally consistent and semantically coherent hidden states that can be naturally interpreted as scenes. Each segment is assumed to be generated through a non-parametric mixture process that allows variations of segments within the same scenes to accommodate the dynamic and noisy nature of many real-world surveillance videos. The scene and mixture component assignment of BN-SVP also induces a pairwise similarity among segments, resulting in non-parametric construction of a submodular set function. Integrating this function with an MIL loss effectively exposes the model to a diverse set of potentially positive instances to improve its training. A greedy algorithm is developed to  optimize the submodular function and support efficient model training. Our theoretical analysis ensures a strong performance guarantee of the proposed algorithm. The effectiveness of the proposed approach is demonstrated over multiple real-world anomaly video datasets with robust detection performance.

\end{abstract}
\vspace{-4mm}
\section{Introduction}
\vspace{-2mm}
Anomaly detection from videos poses fundamental challenges as abnormal activities are usually rare, complicated, and unbounded in nature \cite{Liu2019MarginLE}. Furthermore, segment or frame labels are typically unavailable due to high labeling cost and therefore, the detection models have to rely on the weak video level labels \cite{Sultani2018}. There are two main streams of work to handle the challenging anomaly detection task. The first stream treats anomaly detection as an unsupervised learning problem \cite{TianZ0G19}. It assumes that an event is considered to be abnormal if it deviates significantly from a predefined set of normal events included in the training data \cite{Cong2011,Huan2015,  Vu2019RobustAD}. 
%Based on this assumption, various autoencoder \cite{Huan2015,  Vu2019RobustAD}, and dictionary-based approaches \cite{Karpathy2014} have been proposed where anomaly events are identified through a reconstruction loss. 
However, a model trained on limited normal data is likely to capture only specific characteristics present in the training dataset, and therefore, testing normal events deviating significantly from the training normal events will lead to a high false alarm rate \cite{Zhong2019}. The second stream of research has attempted to address the limitation by formulating the problem as multiple instance learning (MIL) that models each video as a bag and its segments (or frames) as instances within the bag \cite{Sultani2018}. 
%The MIL based approach considers both normal and abnormal samples in the training dataset with the assumption that only video level annotation is available during the training process which indicates the anomaly type of each video. Due to the high labeling cost, the MIL based approaches consider that the frame-level anomaly labels are absent for each abnormal video. 
The goal is to learn a model that can effectively make frame-level anomaly predictions relying on the video-level labels during the training process. One effective MIL learning objective is to maximize the gap between two instances having the respective highest anomaly scores from a pair of positive and negative bags. The maximum score based MIL (referred as MMIL) model outperformed the unsupervised approaches and achieved promising performance in real-world long surveillance videos \cite{Sultani2018}.

\begin{figure*}[t!]
\centering
\begin{subfigure}{0.19\textwidth}
  \centering
  \includegraphics[width=0.9\linewidth]{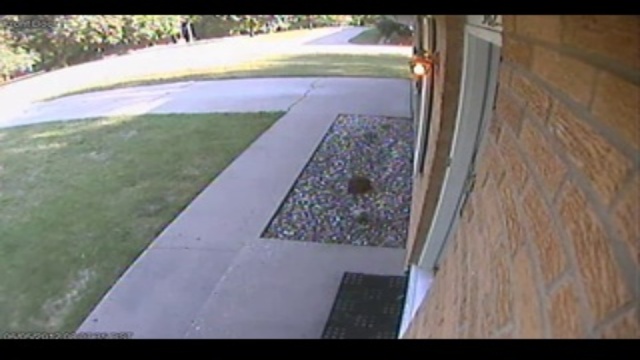}
  %\vspace{-2mm}
  \caption{Burglary (Outlier 1)}
\end{subfigure}
\begin{subfigure}{0.19\textwidth}
  \centering
  \includegraphics[width=0.9\linewidth]{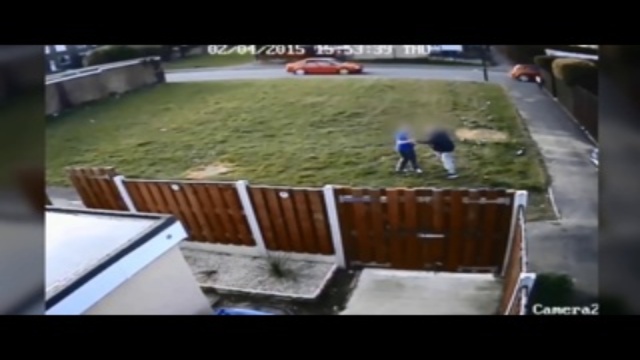}
  %\vspace{-2mm}
  \caption{Walking (Outlier 2)}
\end{subfigure}
\begin{subfigure}{.19\textwidth}
  \centering
  \includegraphics[width=0.9\linewidth]{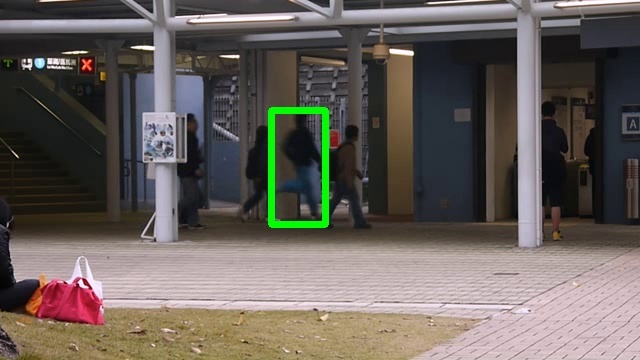}
  %\vspace{-2mm}
  \caption{Running (Modality 1)}
\end{subfigure}%
\begin{subfigure}{0.19\textwidth}
  \centering
  \includegraphics[width=0.9\linewidth]{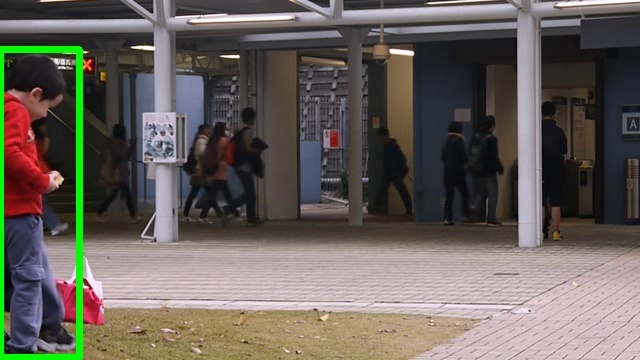}
  %\vspace{-2mm}
  \caption{Loitering (Modality 2)}
\end{subfigure}
\begin{subfigure}{0.19\textwidth}
  \centering
  \includegraphics[width=0.9\linewidth]{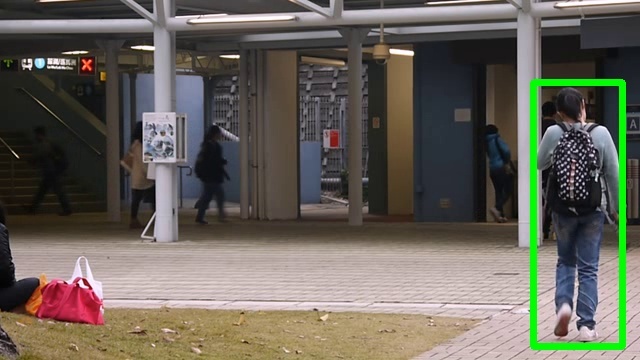}
  %\vspace{-2mm}
  \caption{Walking (Modality 3)}
\end{subfigure}
\vspace{-2mm}
\caption{Examples of outlier (a-b) and multimodal frames (c-e) from the Avenue dataset}
\label{fig: different_anomaly_examples}
\vspace{-3mm}
\end{figure*} 
However, there are two key limitations with the MMIL model. First, the presence of noisy outliers (different from other normal events) in both abnormal and normal videos may significantly impact the overall model performance. This is because the objective function solely focuses on the individual segments from both positive and negative bags, making the training process sensitive to noises. Figure~\ref{fig: different_anomaly_examples} (a-b) shows the example normal frames that are significantly different from other normal ones in real-world surveillance videos. The first frame is from the burglary video that looks similar to an abnormal frame from a video with an arson event. The second frame is from the shooting video that looks similar to a fighting frame. Hence, they may serve as outliers in the corresponding videos. 

Second, if multiple types of abnormal events (referred to as multi-modal anomaly) present in a single abnormal video, MMIL may only detect one type of anomaly while missing other important ones due to the limitation of the objective function. Figure \ref{fig: different_anomaly_examples} (c)-(e) demonstrate three frames with different anomaly types from an example video in the Avenue dataset \cite{Lu2013}. In Figure \ref{fig: different_anomaly_examples} (c), the person is running, which is regarded as a strange action in that context~\cite{Lu2013}. In (b), it shows a person waiting in a place holding some object in the hand, and (c) involves a person walking in the wrong direction. Therefore, the single video has multiple anomaly frames leading to a multimodal scenario.

Top-$k$ ranking loss has been adopted in an attempt to address the issues as outlined above. It maximizes the gap between the mean score of the top-$k$ predicted instances from a positive bag and that of a negative one \cite{Tian_2021_ICCV,Sapkota2021}. However, there are inherent limitations by using a top-$k$ loss. First, it tends to be extremely sensitive to the selected $k$ value. Figure \ref{fig: sensitivity_auc_k} shows the highly fluctuating detection performance from two real-world surveillance video datasets. Since there is no frame (or segment) labels available during model training, setting an optimal $k$ through cross-validation is infeasible or highly costly. Furthermore, given the diverse videos, the number of abnormal instances may vary significantly from one video to another implying we should have a different $k$ for each video. Hence, applying the same $k$ to all videos as in the existing approaches fails to capture the nature of the data. Another serious but more subtle issue is that all (or most of) the selected $k$ segments may come from the same sub-sequence of the video. Using a consecutive set of visually similar segments is less effective for model training, making it more likely to suffer from outlier and multimodal scenarios. As a result, top-$k$ approaches will fall short in providing a reliable detection performance in most practical settings as evidenced by our experiments.

%One possible way to tackle the above challenging problem is to use top-$k$ ranking loss where instead of maximizing the gap between two instances having highest scores from positive and negative bags, it maximizes the gap between the mean score of the top-$k$ predicted instances from a positive bag and negative bags \cite{Tian_2021_ICCV}. There are few serious issues associated with the top-$k$ based approaches. First, the approach is very sensitive to the k-value selection specially in the challenging datasets such as UCF Crime. As shown in Figure \ref{fig: sensitivity_auc_k}, the approach performance varies a lot using the average top-$k$ based approach. As all the training data-points do not have a frame-level annotation, the optimal selection of the k value has to rely on the testing dataset which is unrealistic in the practical scenario. Furthermore, given the distinct nature of the videos, the number of abnormal instances may vary significantly from one video to another implying we should have a different k for each video. Another critical issue in the top-k based approach is that all chosen top-$k$ instances may come from the same sub-sequence of the video. Including all these visually similar instances does not add new value to benefit the model training. Meanwhile considering all top-$k$ instances in the same sub-sequence makes the approach less sensitive toward outlier and multimodal scenario.

\begin{figure}[t!]
\vspace{-10mm}
\centering
\begin{subfigure}{0.23\textwidth}
  \centering
  \includegraphics[width=\linewidth]{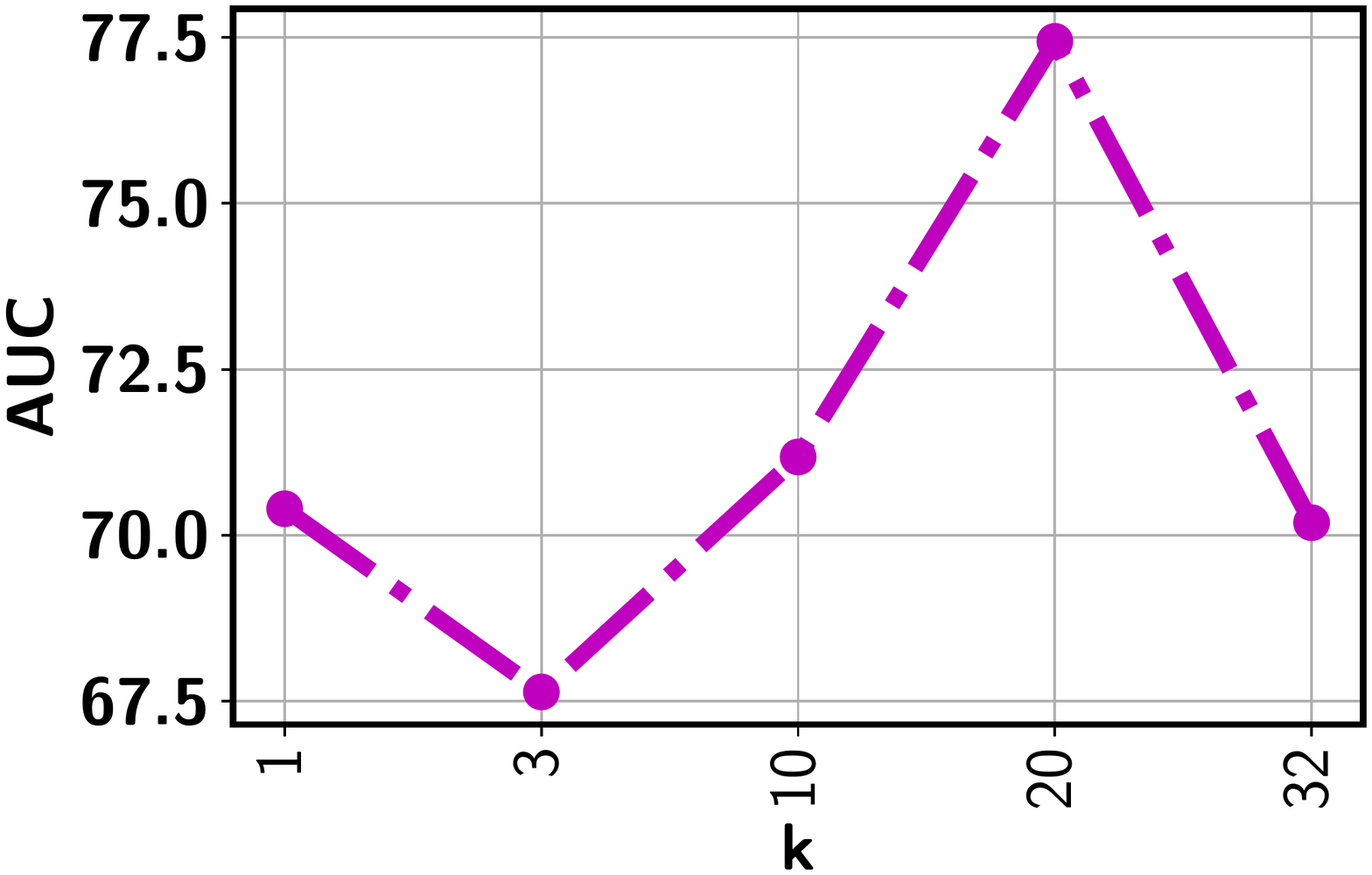}
  \vspace{-18mm}
  \caption{Avenue}
\end{subfigure}
\begin{subfigure}{0.23\textwidth}
  \centering
  \includegraphics[width=\linewidth]{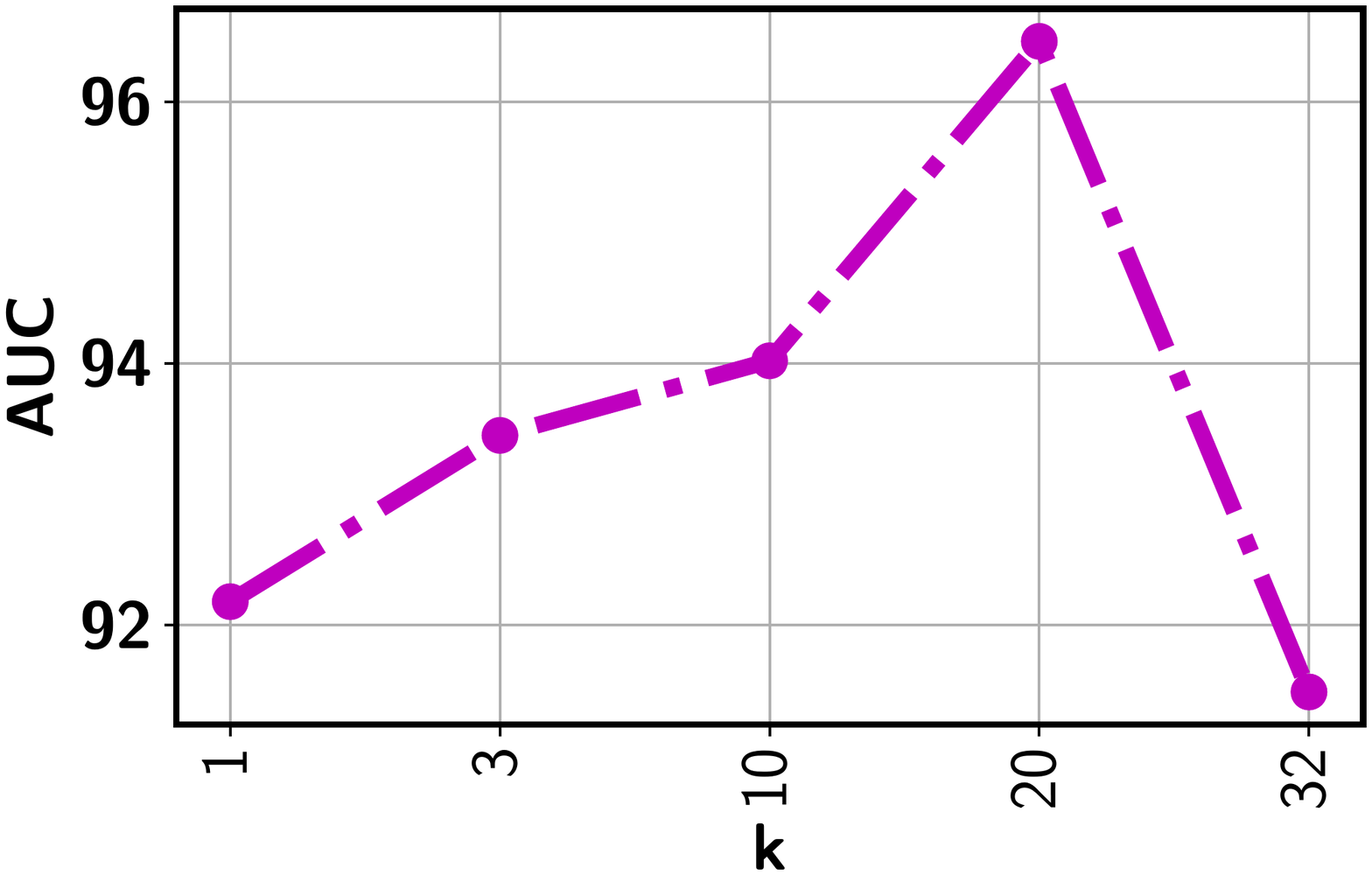}
  \vspace{-18mm}
  \caption{ShanghaiTech}
\end{subfigure}
\vspace{-2mm}
\caption{Highly fluctuating detection performance w.r.t. $k$}
\label{fig: sensitivity_auc_k}
\vspace{-6mm}
\end{figure}

To address the fundamental limitations of existing solutions, we propose novel Bayesian non-parametric construction of a submodular set function, which is integrated with multiple instance learning to deliver robust video anomaly detection performance under practical settings. Instead of choosing a set of instances with the highest prediction scores that are likely from a consecutive sub-sequence, maximizing a specially designed submodular function can involve a more diverse set of instances and expose the model to all potentially abnormal segments for more effective model training. Furthermore, the submodular set function is constructed in a non-parametric way, which induces a pairwise similarity among different segments in a video based on the diverse nature of the data. More specifically, an infinite Hidden Markov Model with a Hierarchical Dirichlet prior (HDP-HMM) \cite{teh2006hierarchical} augmented with an enhanced self-transition is employed to partition a video through dynamic non-parametric clustering of its segments. To more effectively accommodate the dynamic and noisy nature of real-world surveillance videos, the emission process of the HMM is also governed by a non-parametric mixture model to allow segments within the same hidden state to have visual and spatial variations. This unique design is instrumental to discover temporally consistent and semantically coherent hidden states that can be naturally interpreted as scenes. Pairwise similarity among different segments is defined according to the state-component structure, which leads to the construction of a submodular set function. We then develop a novel submodularity diversified MIL loss function to ensure robust anomaly detection from real-world surveillance videos with outlier and multimodal scenarios. Our key contributions include:
%To address the above problems, we reformulate the problem as a submodularity constrained average top-k problem that facilitates the selection of the most representative top-k segments  with strong theoretical guarantee from a given video with the help of specially designed spatio-temporal sub-modular function. More specifically, the proposed technique leverages the spatial-temporal non-parametric clustering to partition a video into the varied number of clusters . One representative instance is chosen from each qualified cluster based on the prediction scores. As this clustering technique provides the k-constrained greedy approximation to the submodularity constrained average top-$k$ problem, all the potential abnormal segments are ensured to be covered by the final loss function with strong theoritical guarantee. This ensures that the model can learn from all possible abnormal instances to more robustly tackle the outlier and multimodal scenarios. Further, the non-parametric nature of the clustering process also automatically determines the number of instances (k) and allows the number to vary across different videos. This potentially avoids the issue of selecting specific k value to get the optimal performance. In summary we have a following contributions
\begin{itemize}[noitemsep,topsep=0pt,leftmargin=*]
    \item Formulation of a novel {\bf submodularity diversified MIL loss} that simultaneously extracts a diverse set of potentially positive instances while maximizing the gap between the mean score of these instances from a positive bag and a negative one, respectively. 
    %We reformulate the problem as a {\bf Submodularity Constrained Average Top-k problem} that maximizes the gap between mean score of top-$k$ most representative predicted instances from a positive bag and a negative bag, respectively.
    \item {\bf Bayesian non-parametric construction of the submodular set function} that  infers the diversity from the video data to induce a pairwise similarity among different segments in a video and provide an upper bound on the size of the diverse set.
    %propose a {\bf novel temporal-spatial non-parametric clustering technique} to automatically choose the representative instance for each video, which addresses the inherent limitation of optimal $k$ value selection in a top-$k$ based loss and makes the approach more robust to outlier and multimodal scenarios.
    \item {\bf A greedy algorithm} that leverages the state-component hierarchical structure resulting from the non-parametric construction for submodular set function optimization and efficient model training.
    \item {\bf Theoretical results} to ensure strong performance guarantee of the greedy algorithm. 
%    We perform a theoretical analysis that shows the solution obtained using {\bf novel temporal-spatial non-parametric clustering technique} is guaranteed to be no worse than $(1-e^{-1})$ of the optimal solution.
    %\item 
   % We conduct experiments over multiple real-world anomaly detection video data for empirical evaluation. 
\end{itemize}
The proposed approach achieves the state-of-the-art robust anomaly detection performance on real-world surveillance videos with noisy and multimodal scenarios.
\vspace{-4mm}
\section{Related Work}
\label{sec: related work}
%In this section, we summarize related work on video anomaly detection, multiple instance learning, and approaches using top-$k$ loss in the supervised learning setting.
\vspace{-2mm}
%\vspace{-2mm}\subsection{Anomaly Detection}
Encoding and sparse reconstruction-based approaches have been employed for anomaly detection, assuming that abnormal events are rare and deviate from normal patterns. They aim to capture the normal patterns using models, such as Gaussian processes (GPs)~\cite{Li2015AnomalyDI} and HMMs \cite{Kratz2009}, to identify anomalies as outliers based on the reconstruction loss. Sparse representation-based approaches  construct a dictionary for normal events and identify the events with the high reconstruction error as anomalies \cite{Lu2013}. Recent approaches consider both abnormal and normal events in the training process. For video anomaly detection, since only video-level labels are assumed to be available during model training \cite{He2018}, MIL offers a natural solution by modeling each video as a bag and the associated segments (frames) as instances of the bag. Sultani et al. proposed an MIL based approach that enables to maximize the gap between highest prediction scores from a positive and negative bags, respectively \cite{Sultani2018} . However, this maximum score based MIL model (\ie, MMIL) is insufficient to handle outlier and multimodal scenarios as discussed earlier. 

%this approach may not be effective to the anomaly detection tasks due to the issues discussed earlier.

Top-$k$ ranking loss based MIL models have been developed to address the limitations of the MMIL model~\cite{Tian_2021_ICCV,Sapkota2021}. These models produce state-of-the-art detection performance given that a suitable $k$ value can be assigned in advance. However, as demonstrated earlier, the detection performance of such models is highly sensitive to the chosen $k$ value. Meanwhile, given the diverse nature of videos, applying the same $k$ value to all the videos is sub-optimal. More importantly, since instance level labels are not available during training time, choosing a single $k$ value through cross-validation is infeasible or incurs a high annotation cost. Distributionally Robust Optimization (DRO) has been used to convert the top-$k$ set into an uncertainty set that allows the model to focus on instances in proportion to their prediction scores \cite{Sapkota2021}. This is equivalent to assigning soft membership to involve instances into the MIL loss function. However, the size of the uncertainty set is controlled by the radius (\ie, $\eta$) of the uncertainty ball, which needs to be manually set. Furthermore, the model may put more focus on a set of consecutive segments with the highest prediction scores and ignore some other potentially positive segments. 

The proposed approach constructs a novel submodular set function in a non-parametric way by inferring the diversity from data automatically. By jointly optimizing the submodular function and the MIL loss, it automatically chooses a diverse set of segments and lets the model better differentiate these (potentially positive) segments from those of a negative bag to ensure good detection performance.

\section{Methodology}
\vspace{-1mm}
Following the standard MIL assumption, we consider, for a positive bag, there is at least one abnormal segment whereas, for a negative bag all segments are of normal types. Table \ref{tab: symbol_table} in the Appendix summarizes the major symbols and their descriptions.

\begin{figure}[t!]
\vspace{-20mm}
\centering
  \includegraphics[width=0.7\columnwidth]{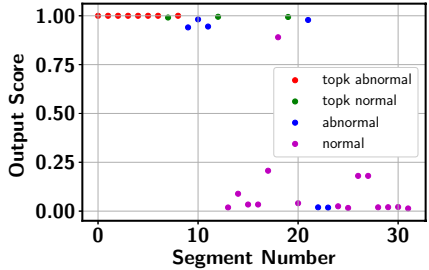}
 \vspace{-20mm}
\caption{Output of the top-$k$ based approach in a video from Avenue datase (missing some of the abnormal segments in top-$k$).}
\vspace{-5mm}
\label{fig: drawback_topk}
\end{figure}

\subsection{Preliminaries}
Let ${\bf x}_i^+$ be the $i^{th}$ segment in the positive bag $\mathcal{B}_{pos}$ and ${\bf x}_j^-$ indicates the $j^{th}$ segment in the negative bag $\mathcal{B}_{neg}$. Also consider $n$ as the total number of instances per bag. The maximum score based MIL (MMIL) model tries to maximize the gap between the maximum prediction score from positive bag and that from the negative bag \cite{Sultani2018}:
%\begin{equation}
%    \max_{i\in \mathcal{B}_{pos}}f({\bf x}_i^+)>\max_{j\in \mathcal{B}_{neg}}f({\bf x}_j^-)
%    \label{eq: ranking_mil}
%\end{equation}
\begin{equation}
    L(\mathcal{B}_{pos}, \mathcal{B}_{neg}) = \Bigl[1-\max_{i\in \mathcal{B}_{pos}}(f({\bf x}_i^+))+\max_{j\in \mathcal{B}_{neg}}(f({\bf x}_j^-))\Bigr]_+\label{eq:max_mil}
\end{equation}
where $f({\bf x}_i^+)$ (or $f({\bf x}_j^-)$) is the prediction score of ${\bf x}_i^+$ (or ${\bf x}_j^-$) and $[a]_+ = \max\{0, a\}$. %The optimization of the above equation maximizes the gap between the maximum segment prediction score from the positive bag and that from the negative bag. 
As mentioned earlier, MMIL is less effective to handle outlier and multimodal scenarios. The top-$k$ ranking loss partially addresses the limitation of MMIL by maximizing the gap between an average of $k$ highest segment predictions  from the positive bag and maximum segment prediction score from a negative bag:  
\begin{align}
   \hspace*{-2.1mm} L(\mathcal{B}_{pos}, \mathcal{B}_{neg}) \hspace*{-.5mm}=\hspace*{-.5mm} \Bigl[1-\frac{1}{k}\sum_{i=1}^kf({\bf x}_{[i]}^+)+\max_{j\in \mathcal{B}_{neg}} f({\bf x}_j^-)\Bigl]_+\label{eq:avg_topk_mil}
\end{align}
where the positive bag segment predictions are sorted in a non-decreasing order, \ie, $f({\bf x}_{[1]}^+)\geq...\geq f({\bf x}_{[k]}^+)$.

\begin{figure*}[t!]
\centering
\begin{subfigure}{0.19\textwidth}
  \centering
  \includegraphics[width=0.9\linewidth]{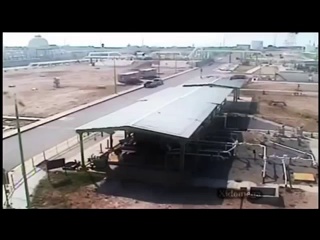}
  %\vspace{-2mm}
  \caption{Normal}
\end{subfigure}%
\begin{subfigure}{0.19\textwidth}
  \centering
  \includegraphics[width=0.9\linewidth]{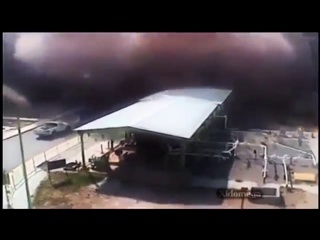}
  %\vspace{-2mm}
  \caption{Abnormal}
\end{subfigure}%
\begin{subfigure}{.19\textwidth}
  \centering
  \includegraphics[width=0.9\linewidth]{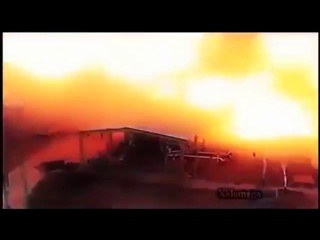}
  %\vspace{-2mm}
  \caption{Abnormal}
\end{subfigure}%
\begin{subfigure}{.19\textwidth}
  \centering
  \includegraphics[width=0.9\linewidth]{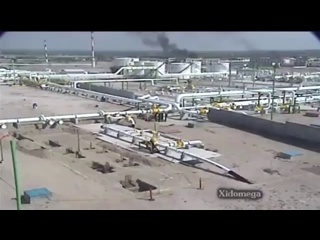}
  %\vspace{-2mm}
  \caption{Normal}
\end{subfigure}%
\begin{subfigure}{0.19\textwidth}
  \centering
  \includegraphics[width=0.9\linewidth]{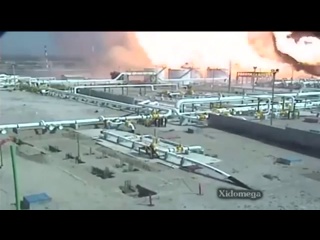}
  %\vspace{-2mm}
  \caption{Abnormal}
\end{subfigure}
\vspace{-2mm}
\caption{Example frames from different scenes in an explosion video from UCF-Crime: (a-b) scene 1, (c) scene 2, (d-e) scene 3}
\label{fig: different_scenes}
\vspace{-4mm}
\end{figure*}

There are two major issues associated with the top-$k$ ranking loss. First, choosing an optimal $k$ value is a key challenge as the number of abnormal instances may vary significantly from one video to another implying different $k$ value for each video. Second, all the selected top-$k$ instances may come from same sub-sequence of the video. Including all those visually similar instances does not contribute much in the model training process. Furthermore, concentrating only on a specific sub-sequence may make the approach less effective to handle multimodal and outlier scenario. Figure \ref{fig: drawback_topk} presents the output of the top-$k$ based model in the Avenue dataset. It can be seen that the top-$k$ based approach picks the consecutive video segments while missing quite a few other abnormal frames.

%\subsection{Submodularity Constrained Average Top-k Problem}
\subsection{Bayesian Non-parametric Submodular Set Function Construction}
The proposed Bayesian non-parametric submodular video partition (BN-SVP) approach offers a novel integrated solution to  address the above two fundamental challenges simultaneously. In particular, since submodular set functions provide a natural measure for diversity, we design a special submodular set function that enables discovery of a representative set of segments from a video. This avoids only choosing visually similar consecutive video segments like in the top-$k$ approach, which enhances the model's exposure to potential abnormal instances during model training. As a result, the model's capability to handle multimodal and outlier scenarios can be effectively improved. 

However, maximizing a submodular set function still requires to specify the size of the set. As mentioned above, choosing a set with an optimal size in video anomaly detection is highly challenging. To this end, we propose a novel Bayesian non-parametric construction of the submodular set function. The non-parametric construction leverages both visual features of the video segments and their temporal closeness to derive a similarity measure that allows us to define a submodular set function $F (\mathcal{C}^+)$, where $\mathcal{C}^+$ represents a subset of segments in a video. The size of $\mathcal{C}^+$ is automatically determined through Bayesian non-parametric analysis of the video. Intuitively, most videos, especially those with anomalies, usually consist of multiple scenes, where each scene is comprised of a consecutive set of visually similar segments. Figure~\ref{fig: different_scenes} shows the example frames from three different scenes in a video that records an explosion event. Ideally, if a video could be partitioned based on these scenes, we can choose representative (and potentially positive) segments from each scene. Such information can significantly facilitate the optimization of the submodular set function. However, both the number and the types of the scenes are unavailable during model training.  

The proposed BN-SVP addresses the above issue through non-parametric partition of a video. It  builds upon and extends an HDP-HMM model that places a  Hierarchical Dirichlet Process (HDP) prior on the state transition distribution of a Hidden Markov Model (HMM) model~\cite{teh2006hierarchical}. By using an HMM to model a video (as a sequence of segments), each discovered hidden state can be naturally interpreted as a scene in the video. The HDP prior allows us to determine the optimal number of states (\ie, scenes) according to the nature of the data. However, real-world videos may be highly noisy and directly using an HDP-HMM model may extract too many scenes with less significant visual characteristics (\eg, spatial changes of objects or addition/removal of a small number of objects). To address this issue, we follow the sticky HDP-HMM to encourage a stronger self-transition of a state~\cite{Fox2011ASH}. This will result in temporal persistence of states to produce longer and semantically coherent scenes. To further accommodate spatial changes or variations in certain objects, we allow the emission distribution to follow  another non-parametric DP that automatically determines the number of mixture components (\ie, sub-scenes) within the same scene. For example, scene 1 in Figure~\ref{fig: different_scenes} is comprised of two sub-scenes: the first with a clear sky and the second with smoke in the sky. 

%To address the above limitations, we reformulate the problem as a submodularity constrained average top-k optimization where a video specific k is automatically selected by leveraging the spatial-temporal properties of the instances along with the predicted scores. This step includes clustering the video segments and choosing the representative from the cluster. In the video there can be diverse scenes and transitioning from one scene type to another can happen in the irregular fashion. As such, we do not have the prior knowledge regarding the number of clusters to be formed. Also, cluster assignment for the segments may not have any specific sequence. This problem can be modeled using the Hidden Markov Model (HMM) where each state represents the cluster assignment for each segment in a video. Considering the non-parametric nature of Hierarchical Dirichlet Process (HDP), we can use HDP prior on the state transition in HMM with each DP governing the transition probability of each group of states that the current state can make a transitioning to.

More specifically, consider a collection of hidden states (\ie scenes in a video), the transition probability of state $j$ to other states is governed by a DP:
%Consider we have $J$  groups then the collection of the Dirichlet Process (DP) mixture models underlying the observations (segments) in these groups can be represented as:
\begin{equation}
    \mathcal{G}_j = \sum_{l=1}^\infty \hat{\pi}_{jl}\delta_{\hat{\phi}_{jl}}, \; \hat{\boldsymbol{\pi}}_j\thicksim \text{GEM}(\alpha)
\end{equation}
where $\text{GEM}(\alpha)$ is formed through a stick breaking construction process with parameter $\alpha$~\cite{teh2006hierarchical}, $\hat{\phi}_{jl}$ is drawn from a base distribution $\mathcal{G}_0$, which follows another DP
\begin{equation}
    \mathcal{G}_0 = \sum_{k=1}^\infty\beta_k \delta_{\phi_k}, \; \boldsymbol{\beta} \thicksim \text{GEM}(\gamma), \; \phi_k\thicksim H
\end{equation}
%Where $\beta_k$ is obtained using the stick breaking construction process with the parameter $\gamma$ and is represented as $\beta \thicksim GEM(\gamma)$. 
Because of the discrete nature of $\mathcal{G}_0$, there can be multiple $\hat{\phi}_{jl}$'s taking an identical value of $\phi_k$. Considering the unique set of atoms $\phi_k$, we can rewrite $\mathcal{G}_j$ as
\begin{equation}
    \mathcal{G}_j = \sum_{k=1}^\infty \pi_{jk}\delta_{\phi_k}, \; \boldsymbol{\pi}_j\thicksim \text{DP} (\alpha, \boldsymbol{\beta}), \phi_k\thicksim H
\end{equation}
%Considering $\{z_{i1},z_{i2}, z_{iT}\}$ be the indicator variable indicating the cluster (atom) number for each segment present in the video. 
Given the highly dynamic and noisy nature of many real-world surveillance videos, directly applying the standard HDP-HMM model to partition a video may result in many redundant scenes and rapidly switches among them. This is problematic in our setting in which it is critical to infer semantically coherent scenes along with a slower transition among them. As a result, it is essential to ensure temporal persistence of the discovered scenes \cite{Fox2011ASH}. This can be achieved through enhanced self transitions. In particular, the transition probability of the $j$'s state is augmented by 
%So because of the rapid switching the model may assign multiple states for same scene. However, for a specific scene, we want the model to stay in the same state (cluster). To address this issue, following the work from Fox et al. \cite{Fox2011ASH}, 
%we can change the prior distribution parameters of the group specific transition distribution $\pi_j$ as follow
\begin{equation}
    \boldsymbol{\pi}_j \thicksim \text{DP}\left(\alpha+\rho, \frac{\alpha\boldsymbol{\beta}+\rho \delta_j}{\alpha+\rho}\right)
\end{equation}
%The resulting model is called Sticky HDP-HMM \cite{Fox2011ASH}. This means for a specific group j, we add an additional term $\kappa$ that 
This has the effect of increasing the expected probability of staying in the same state. 
\begin{equation}
 \mathbb{E}[\pi_{jk}|\boldsymbol{\beta}] = \left\{
  \begin{array}{@{}ll@{}}
    \frac{\alpha\beta_j+\rho}{\alpha+\rho} & \text{if}\  k=j \\
  \frac{\alpha\beta_k}{\alpha+\rho}, & \text{otherwise}
  \end{array}\right.
\label{eq:expected_transition_probability}
\end{equation}
%For the segments from the $j^{th}$ group (with scene type $j$), the segments are encouraged to take $j^{th}$ cluster (atom) controlled by the parameter $\kappa$. In the extreme case with $\kappa = 1$, the next segment from the same scene will very likely to be in the same state while assigning the same cluster (atom) j. In our setting, all segments with a specific type of scene are encouraged to have a same cluster assignment. However, when the scene changes, despite being high $\kappa$ value the model can easily make a transitioning to the next state because of the difference in the feature space.
To allow certain levels of variations within the same scene and accommodate the highly dynamic nature of a video sequence, we propose to model the emission process using a mixture distribution governed by another non-parametric DP. This design offers three unique advantages. First, it further ensures the temporal persistence of a scene as for a segment with less significant visual differences, it can stay in the same scene by switching to a different mixture component instead of transitioning to another (redundant) scene. Second,  it offers a fine-grained partition of the video sequence, which is instrumental to separate abnormal segments (\eg, frames (b) and (e) in Figure~\ref{fig: different_scenes}) from normal ones (\eg, frames (a) and (d) in Figure~\ref{fig: different_scenes}) that share a common background. Last, the number of mixture components is automatically determined by the DP (\eg, scenes 1 \& 3 have two mixture components while scene 2 only has one).  
%In the video sequence, it is quite often to have multimodality in the same scene. This means each hidden state in the HMM may have complex multimodal distribution. In this case, emission distribution could be modeled non-parametrically using an infinite DP of mixture of Gaussian.  For this we can augment the  HDP-HMM by introducing a term $s_t$ that indices the mixture components of the $z^{th}_t$ emission density with $z_t$ being the state of HMM indicating the atom (cluster). 
For the $k$-th scene, there is an unique stick-breaking distribution $\boldsymbol{\psi}_k\thicksim \text{GEM}(\tau)$ that defines the weights of the mixture components within the scene. Then, given the scene and mixture component assignment $(z_i=k, s_i=t)$ of a segment ${\bf x}^+_i$ in a video, it is drawn from a specific multivariate Gaussian: $\mathcal{N}(\boldsymbol{\mu}_{k,t}, \Sigma_{k,t})$. 

Posterior inference of the augmented HDP-HMM model with a DP mixture for emission can be achieved through direct assignment \cite{teh2006hierarchical} or blocked sampling with an increasing mixing rate \cite{Fan2017}. Hyper-parameters can also be inferred by placing a vague prior on them and conduct Gibbs sampling. 

The scene and component assignments of BN-SVP induces a pairwise similarity among segments in a video:
\begin{equation}
\label{eq:similarity}
  \left\{
  \begin{array}{@{}ll@{}}
      S_{i, j} = ({\bf x}_i^+)^\top\Sigma^{-1}_{z_i,s_i}{\bf x}_j^+ & \text{if}\ s_i==s_j \wedge z_i==z_j \\
      S_{i, j} = 0 & \text{otherwise}
%    {\bf f}^*_+\leftarrow {\bf f}^*_++f^*_{c_i}, & \text{if}\   f^*_{c_i} \geq \epsilon \\
%    {\bf f}^*_+\leftarrow{\bf f}^*_+, & \text{otherwise}
  \end{array}\right.
\end{equation}
It is worth to note that the similarity between two segments ${\bf x}_i^+$ and ${\bf x}_j^+$ is evaluated using the learned feature representations (through a DNN) instead of the raw features. The induced similarity allows us to define a submodular set function \cite{Krause2008submodular, Krause08optimizingsensing} summarized by the follow proposition. 
\begin{proposition}
Let $\kappa$ denote the number of unique mixture components across all the discovered states in a bag $\mathcal{B}_{pos}$ and $\mathcal{C} \subset \mathcal{B}_{pos}$ is a subset of $\mathcal{B}_{pos}$ with size $\kappa$. Given the BN-SVP induced pairwise similarity defined in \eqref{eq:similarity}, the following function is a submodular set function:
\begin{equation}
F({\mathcal C}) = \sum_{i\in {\mathcal{B}_{pos}}}\max_{j \in {\mathcal C}} S_{i, j}
    \label{eq: submodular_function}
\end{equation}
\end{proposition}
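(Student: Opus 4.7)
The plan is to prove that $F$ satisfies the defining diminishing-returns inequality, namely that for every pair of sets $A \subseteq B \subseteq \mathcal{B}_{pos}$ and every element $e \in \mathcal{B}_{pos} \setminus B$,
\begin{equation*}
F(A \cup \{e\}) - F(A) \;\geq\; F(B \cup \{e\}) - F(B).
\end{equation*}
Since submodularity is preserved under non-negative linear combinations, I would first decompose $F$ as $F(\mathcal{C}) = \sum_{i \in \mathcal{B}_{pos}} g_i(\mathcal{C})$ where $g_i(\mathcal{C}) \delequal \max_{j \in \mathcal{C}} S_{i,j}$, and reduce the claim to proving that each per-segment coverage function $g_i$ is submodular.

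For a fixed index $i$, the argument is a short calculation. I would set $a \delequal \max_{j \in A} S_{i,j}$ and $b \delequal \max_{j \in B} S_{i,j}$ and note that $A \subseteq B$ immediately gives $a \leq b$. Using $\max(u,v) - u = [v-u]_+$, the two marginal gains can be written as
\begin{equation*}
g_i(A \cup \{e\}) - g_i(A) = [S_{i,e} - a]_+,
\qquad
g_i(B \cup \{e\}) - g_i(B) = [S_{i,e} - b]_+.
\end{equation*}
Since $a \leq b$, we have $S_{i,e} - a \geq S_{i,e} - b$, and because $[\cdot]_+$ is monotone nondecreasing, the left-hand side dominates the right-hand side. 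Summing this inequality over $i \in \mathcal{B}_{pos}$ yields the diminishing-returns property for $F$.

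Two minor points I would handle explicitly rather than sweep aside. First, the pairwise similarities $S_{i,j}$ defined in \eqref{eq:similarity} are not guaranteed to be non-negative (entries can vanish across state-component boundaries and the quadratic form $(\mathbf{x}_i^+)^\top \Sigma_{z_i,s_i}^{-1} \mathbf{x}_j^+$ can in principle be negative), so the proof above is written to avoid assuming $S_{i,j} \geq 0$; the $[\cdot]_+$ argument works regardless. Second, I need a convention for $g_i(\emptyset)$; setting $g_i(\emptyset) \delequal -\infty$ (or any value no larger than $\min_j S_{i,j}$) is standard and keeps the diminishing-returns inequality intact for all nonempty $A$, which is the only case needed since $F$ will always be evaluated on nonempty subsets during optimization.

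I do not expect a serious obstacle: the result is the classical fact that facility-location-style coverage functions are submodular, and the proof hinges only on the monotonicity of $[\cdot]_+$ together with the closure of submodular functions under summation. The only care required is in the bookkeeping around the sign and the empty-set convention noted above.
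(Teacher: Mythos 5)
Your proof is correct and follows essentially the same route as the paper, which simply observes that $F$ is an instance of the facility-location function and cites its known submodularity; you supply the standard argument behind that citation (per-index coverage functions $g_i(\mathcal{C})=\max_{j\in\mathcal{C}}S_{i,j}$ are submodular by monotonicity of $[\cdot]_+$, and submodularity is closed under non-negative sums). Your explicit handling of possibly negative $S_{i,j}$ and of the empty-set convention is a small but legitimate strengthening over the textbook statement, which usually assumes non-negative similarities.
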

Based on the definition of $S_{i,j}$ as shown above, it is straightforward to show that $F({\mathcal C})$ is a special instance of the location facility function~\cite{lin2009select}, which is submodular. Since each mixture component captures a unique sub-scene, maximization of $F({\mathcal C})$ can extract a diverse set of   segments that best represent the all the scenes (and sub-scenes) in the entire video. 
%Maximization of $F({\mathcal C})$ over all instances in $\mathcal{B}_{pos}$ ensures to choose a set $\mathcal{C}$ of most representative instances. 
By further integrating the margin loss given in \eqref{eq:avg_topk_mil}, we achieve a submodularity diversified MIL loss: 
\begin{align}
    &\min_{{\bf w}, \mathcal{C}^+\in \mathcal{B}_{pos}, |\mathcal{C}^+|\leq {\kappa}}L(\mathcal{C}^+)-\lambda F(\mathcal{C}^+)
    \label{eq: sub_cons_avg_topk}%\\
    %&L(\mathcal{C}^+)=\Bigl[1-\frac{1}{|\mathcal{C}^+|}\sum_{i\in \mathcal{C}^+}f^{*}(c^+_i)+\max_{j\in \mathcal{B}_{neg}} f({\bf x}_j^-)\Bigl]_+
\end{align}
where the margin loss is defined over instances in a set $\mathcal{C}^+$ with size no larger than $\kappa$:
\begin{equation}
    L(\mathcal{C}^+)=\Bigl[1-\frac{1}{|\mathcal{C}^+|}\sum_{i\in \mathcal{C}^+}f({\bf x}^+_i)+\max_{j\in \mathcal{B}_{neg}} f({\bf x}_j^-)\Bigl]_+
    \label{eq: pairwise_loss}
\end{equation}
In essence, $\mathcal{C}^+$ includes the set of instances in a positive bag that participate in the model training. The constraint $|\mathcal{C}^+|\leq {\kappa}$ has the effect of excluding some representative segments from the margin loss as these segments are less likely to be abnormal (\eg, with a very low predicted score). Including these segments will increase the margin loss and coefficient $\lambda$ controls the balance between the margin loss and the diversity among the chosen segments.

\subsection{Greedy Submodular Function Optimization}
We propose a greedy algorithm for optimizing the submodular function in \eqref{eq: submodular_function} to ensure efficient model training. The proposed algorithm leverages the special structure of the state and mixture component space resulted from the HDP-HMM partition of the video segments. The performance guarantee of the greedy algorithm is ensured by our theoretical result presented at the end of this section. 

Recall that we use $s_i$ to denote the mixture component of segment ${\bf x}^+_i$. Let $f^*_s$ denote the maximum score among all the segments assigned to the same mixture component and $i^*_s$ be the index of the corresponding representative segment:
\begin{align}
    i^*_s = \arg\max_{\forall i: s_i=s}f({\bf x}_i^+),\;
    f_{s}^* = f({\bf x}_{i^*_s}^+)
\end{align}
We construct a representative set $\widehat{\mathcal{C}^+}$ as follows. Let $\widehat{\mathcal{C}^+}=\Phi$ and for each mixture component $s$, we set
\begin{equation}
  \left\{
  \begin{array}{@{}ll@{}}
      \widehat{\mathcal{C}^+}\leftarrow \widehat{\mathcal{C}^+} \cup \{i^*_s\}, & \text{if}\   f^*_{s} \geq \epsilon \\
    \widehat{\mathcal{C}^+}\leftarrow \widehat{\mathcal{C}^+}, & \text{otherwise}
%    {\bf f}^*_+\leftarrow {\bf f}^*_++f^*_{c_i}, & \text{if}\   f^*_{c_i} \geq \epsilon \\
%    {\bf f}^*_+\leftarrow{\bf f}^*_+, & \text{otherwise}
  \end{array}\right.
 \label{eq:score-selection}
\end{equation}
where $\epsilon$ is a threshold to exclude segments with a low prediction score, which plays an equivalent role as constraint $|\mathcal{C}^+|\leq {\kappa}$ in \eqref{eq: sub_cons_avg_topk}.  In our experiments, we use $\epsilon$ equal to the output of the segment staying in the 35th percentile among video specific segments so as  to avoid skipping any potential abnormal segments. Once a representative set $\widehat{\mathcal{C}^+}$ is constructed, model training can proceed by solving the following MIL loss:
\begin{align}
   \min_{{\bf w}} \Bigl[1-\frac{1}{|\widehat{\mathcal{C}^+}|}\sum_{i\in \widehat{\mathcal{C}^+}}f({\bf x}^+_i)+\max_{j\in \mathcal{B}_{neg}} f({\bf x}_j^-)\Bigl]_+\label{eq:representative_topk_mil}
   \vspace{-2mm}
\end{align}
Given the state and mixture component assignment of each segment in a video, the representative set can be quickly constructed by sorting segments within each component according to their predicted scores and choosing the representative segment from each component by comparing its score with the threshold $\epsilon$. Next, we provide a strong theoretical guarantee that the greedy algorithm can ensure the inclusion of a diverse set of segments for model training. 
\begin{theorem}
The representative set based MIL loss given in \eqref{eq:representative_topk_mil} is  equivalent to the submodularity diversified MIL loss given in \eqref{eq: sub_cons_avg_topk}. Furthermore, using the proposed greedy algorithm to locate the $\kappa$ representative segments essentially provides a $\kappa$-constrained greedy approximation to the maximization of the submodular set function $F(\mathcal{C})$. As a result, the obtained solution is guaranteed to be no worse $(1-e^{-1})$ of the optimal solution.
%\vspace{-2mm}

\end{theorem}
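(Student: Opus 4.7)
The plan is to handle the three assertions of the theorem in order: (i) that the greedy representative loss \eqref{eq:representative_topk_mil} is equivalent to the joint submodularity-diversified MIL loss \eqref{eq: sub_cons_avg_topk}; (ii) that the representative construction is exactly a $\kappa$-cardinality greedy run on $F$; and (iii) that this greedy run enjoys the classical $(1-e^{-1})$ approximation ratio.

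For (i), I would exploit the block structure of the induced similarity \eqref{eq:similarity}: $S_{i,j}$ vanishes unless ${\bf x}_i^+$ and ${\bf x}_j^+$ share both the scene $z_i=z_j$ and the mixture component $s_i=s_j$. Consequently, in $F(\mathcal{C}^+)=\sum_{i\in\mathcal{B}_{pos}} \max_{j\in\mathcal{C}^+} S_{i,j}$, for every fixed $i$ only those $j\in\mathcal{C}^+$ lying in the same component as $i$ can contribute. Including more than one segment from the same component adds no value to $F$, so any size-$\kappa$ maximizer must choose exactly one segment per distinct component — and the score threshold $\epsilon$, which in \eqref{eq:score-selection} excludes components whose best segment scores below $\epsilon$, plays exactly the role of the cardinality bound $|\mathcal{C}^+|\le\kappa$ in \eqref{eq: sub_cons_avg_topk}. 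Within each admitted block the MIL loss $L(\mathcal{C}^+)$ in \eqref{eq: pairwise_loss} is monotone decreasing in the average prediction score, so the joint objective $L(\mathcal{C}^+)-\lambda F(\mathcal{C}^+)$ is minimized by picking the segment $i^*_s=\arg\max_{i:s_i=s} f({\bf x}_i^+)$ from each admitted component. This is precisely $\widehat{\mathcal{C}^+}$, giving the equivalence.

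For (ii) and (iii), I would first verify that $F$ is non-negative, monotone non-decreasing, and submodular (the latter already granted as a facility-location instance by the preceding proposition). The representative construction adds, one at a time, a segment $i^*_s$ from a previously uncovered component $s$ whose score passes the threshold. Because of the block structure, the marginal gain $F(\mathcal{C}^+\cup\{j\})-F(\mathcal{C}^+)$ is zero whenever $j$'s component is already represented and equals $\sum_{i:(z_i,s_i)=(z_j,s_j)} S_{i,j}$ otherwise, so the algorithm is a valid execution of the classical greedy scheme for submodular maximization under the constraint $|\mathcal{C}^+|\le\kappa$. Invoking the Nemhauser--Wolsey--Fisher theorem then yields $F(\widehat{\mathcal{C}^+})\geq (1-e^{-1})\,F(\mathcal{C}^*)$, where $\mathcal{C}^*$ denotes the optimal $\kappa$-sized set.

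The main obstacle I foresee is reconciling, within a single block, the greedy rule "pick the segment of largest marginal gain on $F$" with the algorithm's rule "pick the segment of largest prediction score $f$". These two criteria need not coincide since $F$ depends on the Mahalanobis inner products $S_{i,j}$ while $f$ is the learned anomaly score. I would circumvent this by observing that the value of $F$ on the final set decomposes as a sum over represented components and depends only on \emph{which} components are represented, not on which segment is picked from each (any representative inside a block covers the block identically for the outer max over $i$'s in that same block, up to the constant multiplier $\max_j S_{i,j}$ that is absorbed into the block's contribution). Hence the $(1-e^{-1})$ ratio on $F$ is unaffected by the intra-block choice, while the intra-block choice itself is dictated by $L$, closing the argument cleanly.
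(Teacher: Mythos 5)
Your overall route matches the paper's: a one-segment-per-component lemma derived from the block structure of $S_{i,j}$, the observation that the $\epsilon$ threshold and the intra-block choice by highest score stand in for the $\kappa$ constraint and the $\lambda$ trade-off, and an appeal to the Nemhauser--Wolsey--Fisher bound for the $(1-e^{-1})$ guarantee. Two of your shortcuts, however, are literally false as stated, and it is worth seeing how the paper argues around them. First, you claim that adding a second segment from an already-represented component contributes \emph{nothing} to $F$; in fact $\max_j S_{i,j}$ can strictly increase when a second in-block candidate is added, since $S_{i,j}=({\bf x}_i^+)^\top\Sigma^{-1}_{z_i,s_i}{\bf x}_j^+$ genuinely depends on $j$. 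The paper's lemma instead argues by contradiction that the duplicate's marginal gain is dominated by the gain from covering an uncovered component, using the cohesiveness of each Gaussian component (both duplicates are near the component mean, hence near each other by the triangle inequality, so one of them can be taken to realize every in-block maximum). Second, and for the same reason, your claim that $F$ ``depends only on which components are represented, not on which segment is picked from each'' does not hold, so you cannot conclude that the $(1-e^{-1})$ ratio is unaffected by the intra-block choice, nor that $L-\lambda F$ is minimized by the highest-$f$ segment for an arbitrary $\lambda$. The paper resolves this differently: it does not claim equivalence for every $\lambda$, but argues that choosing the highest-score segment (rather than the $F$-maximizing one) and discarding low-score components via $\epsilon$ amounts to \emph{implicitly} fixing a particular $\lambda$ that trades diversity for MIL loss, and it is for that implicit $\lambda$ that the two objectives coincide. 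If you replace your two overclaims with the cohesiveness argument and the implicit-$\lambda$ framing, your proof becomes essentially the paper's.
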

The detailed proof is provided in the Appendix.
\vspace{-2mm}
\section{Experiments}
\vspace{-1mm}
We conduct extensive experiments to evaluate the effectiveness of the proposed BN-SVP approach. Through these experiments, we aim to demonstrate: (i) outstanding anomaly detection performance by comparing with competitive top-$k$, MIL, and other video anomaly detection models, (ii) robustness to outlier and multimodal scenarios, and (iii) deeper insights on the better detection performance through a qualitative study. 

%We first describe the real-world video datasets and  evaluation metrics used in our experimentation. We then explain our feature extraction and training process. Next, we make the performance comparison between the BN-SVP approach with other average top-k based approaches and DRO based approach proposed by Sapkota et al. \cite{Sapkota2021}. The purpose to perform this analysis is to demonstrate how the existing SOAT techniques are sensitive to their respective hyperparameters and thereby limiting their applicability in the real world problem. Next, we present the overall performance and comparison of the BN-SVP with other state-of-the-art video anomaly detection models. Finally, we provide some qualitative analysis that offers some key insights to justify the superior performance of the proposed BN-SVP.

\subsection{Datasets and Experimental Settings}
%\vspace{-2mm}\paragraph{Datasets and experimental settings}
Our experimentation includes three video datasets of different scales: ShanghaiTech \cite{Luo2017}, Avenue \cite{Lu2013}, and UCF-Crime \cite{Sultani2018}. Table \ref{tab: video_level_distribution} in the Appendix shows how the videos are partitioned into the training/testing sets in each dataset. 
\begin{itemize}[noitemsep,topsep=0pt,leftmargin=*]
    \item {\bf ShanghaiTech} consists of 437 videos (330 normal and 107 abnormal). In the original setting, all training videos are normal. To fit into our setting, we follow the data split in \cite{Zhong2019} to assign normal and abnormal videos in both training and testing sets. 
    \item {\bf Avenue} consists of 16 training and 21 testing videos. 
We perform 80:20 split separately in the abnormal and normal video sets to generate training and testing instances.
\item {\bf UCF-Crime} consists of 13 different anomalies with a total of 1900 videos, where 1610 are training videos and 290 are testing videos. In this dataset, frame labels are available only for the testing videos. 
\end{itemize}
To show the robustness of the proposed approach in the multimodal and outlier scenarios, we also generate the Multimodal and Outlier datasets. Specifically, we create a multimodal scenario by extending the UCF-Crime dataset. For the outlier scenario, we deliberately impose some outliers in the ShanghaiTech dataset. More details of these two datasets are provided in Section \ref{sec:multimodal}. For evaluation, we report the frame-level receiver operating characteristics (ROC) curve along with the corresponding area under curve (AUC). The AUC score indicates the robustness of the performance at various thresholds.

\begin{figure*}[t!]
\centering
\vspace{-3mm}
\begin{subfigure}{.19\textwidth}
  \centering
  \includegraphics[angle=90, width=1\linewidth]{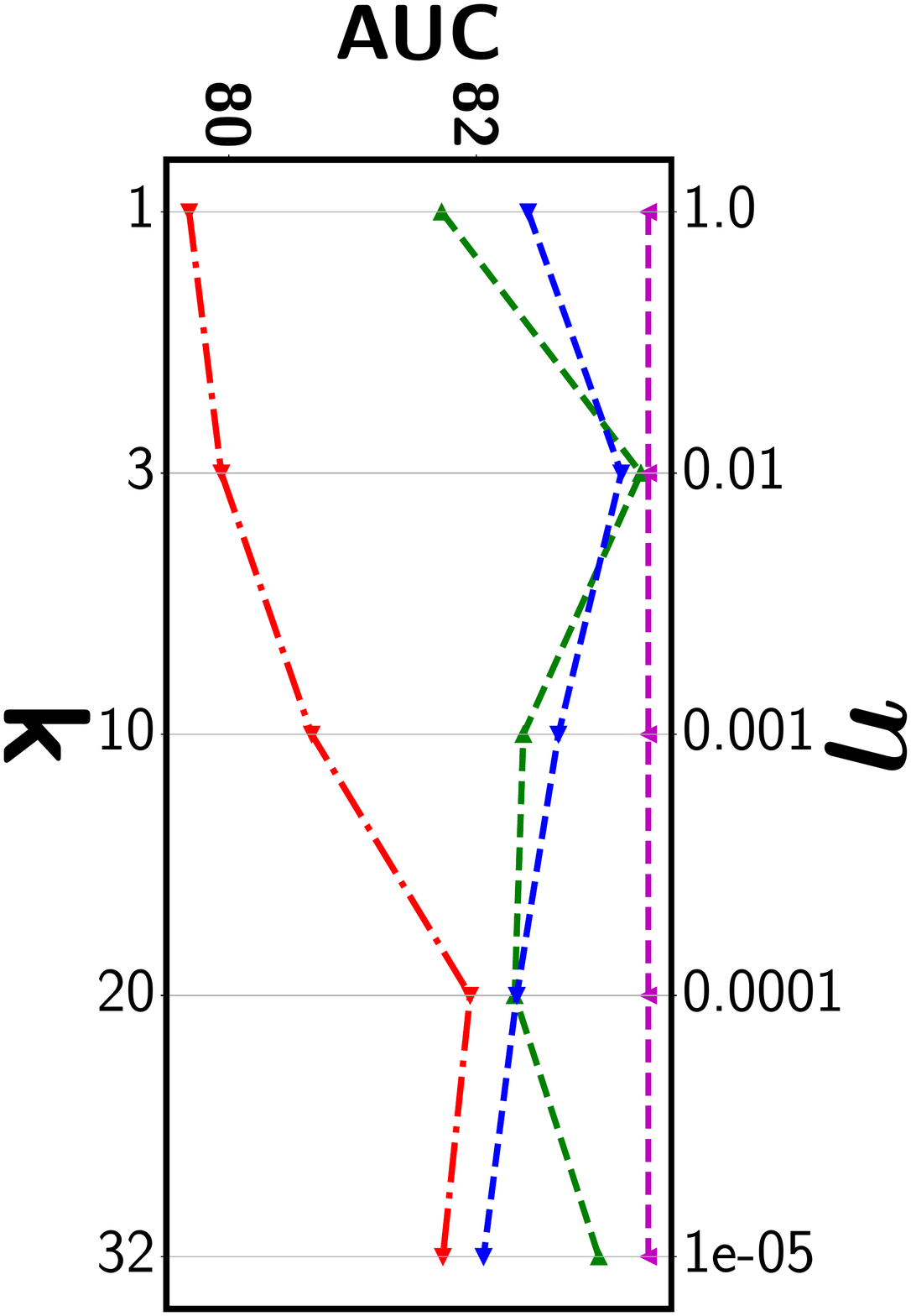}
  %\vspace{-2mm}
  \caption{UCF Crime}
\end{subfigure}%
\begin{subfigure}{0.19\textwidth}
  \centering
  \includegraphics[angle=90,width=1\linewidth]{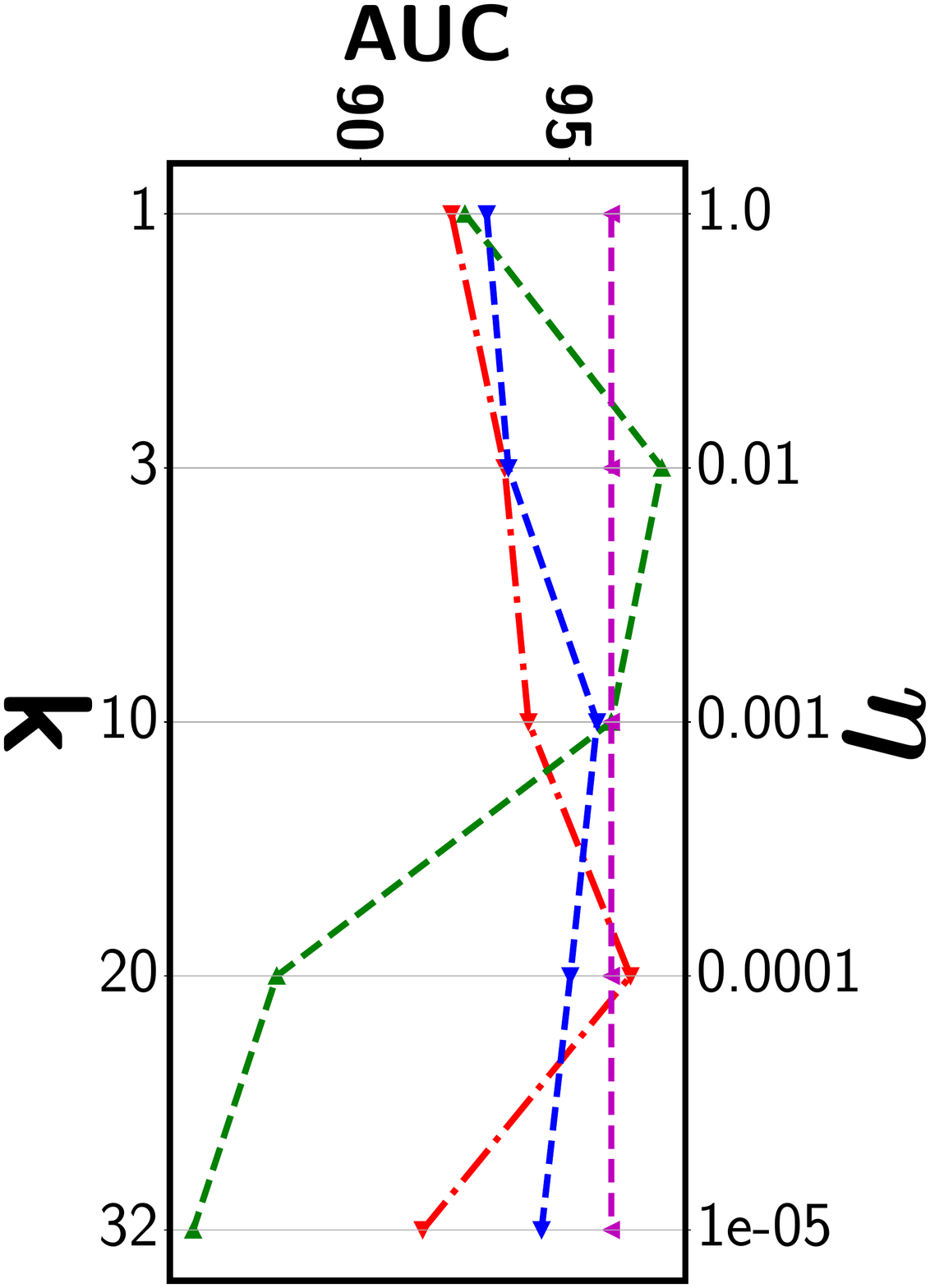}
  %\vspace{-2mm}
  \caption{Shanghaitech}
\end{subfigure}%
\begin{subfigure}{0.19\textwidth}
  \centering
  \includegraphics[angle=90,width=1\linewidth]{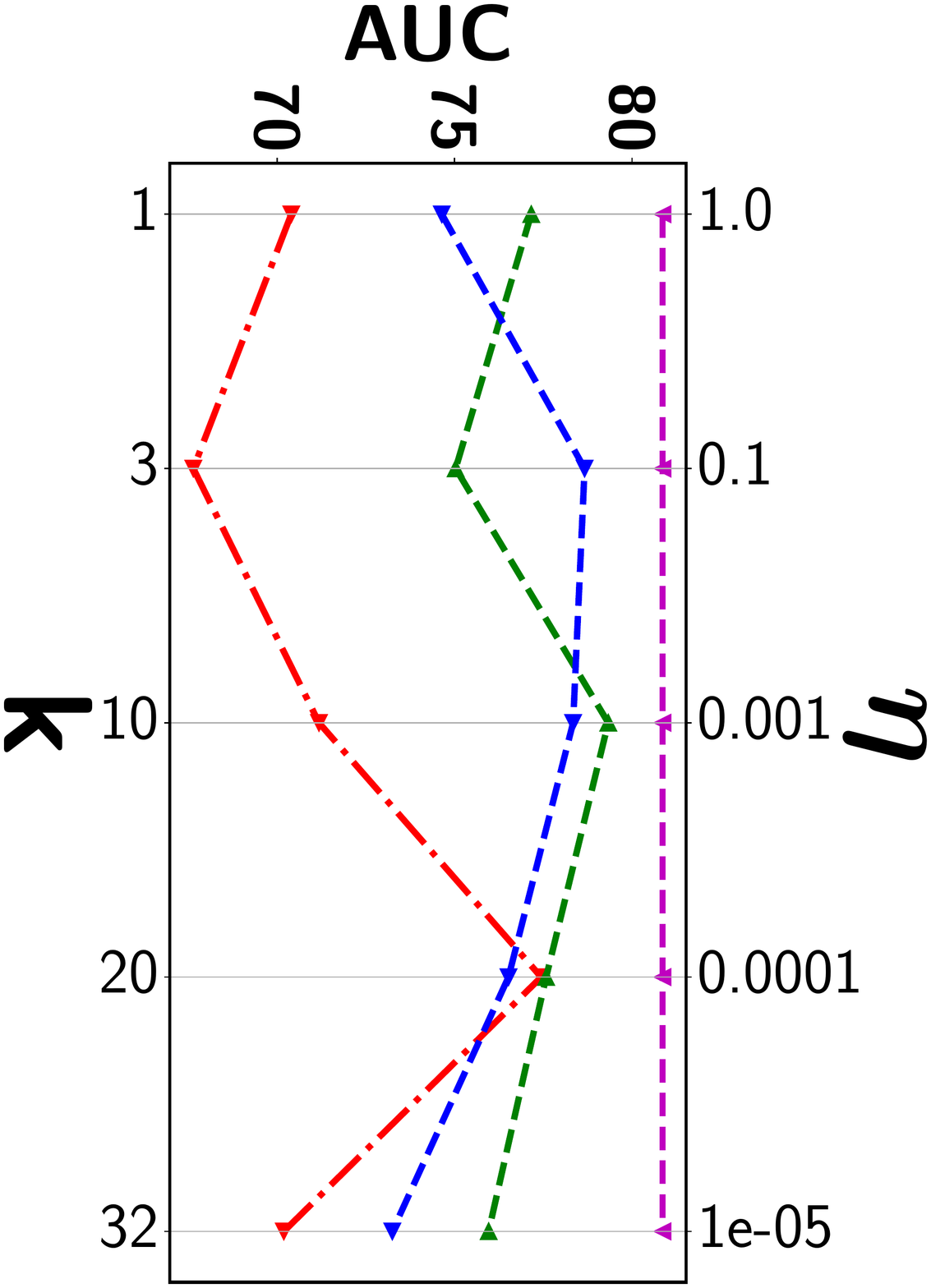}
  %\vspace{-2mm}
  \caption{Avenue}
\end{subfigure}%
\begin{subfigure}{0.19\textwidth}
  \centering
  \includegraphics[angle=90,width=1\linewidth]{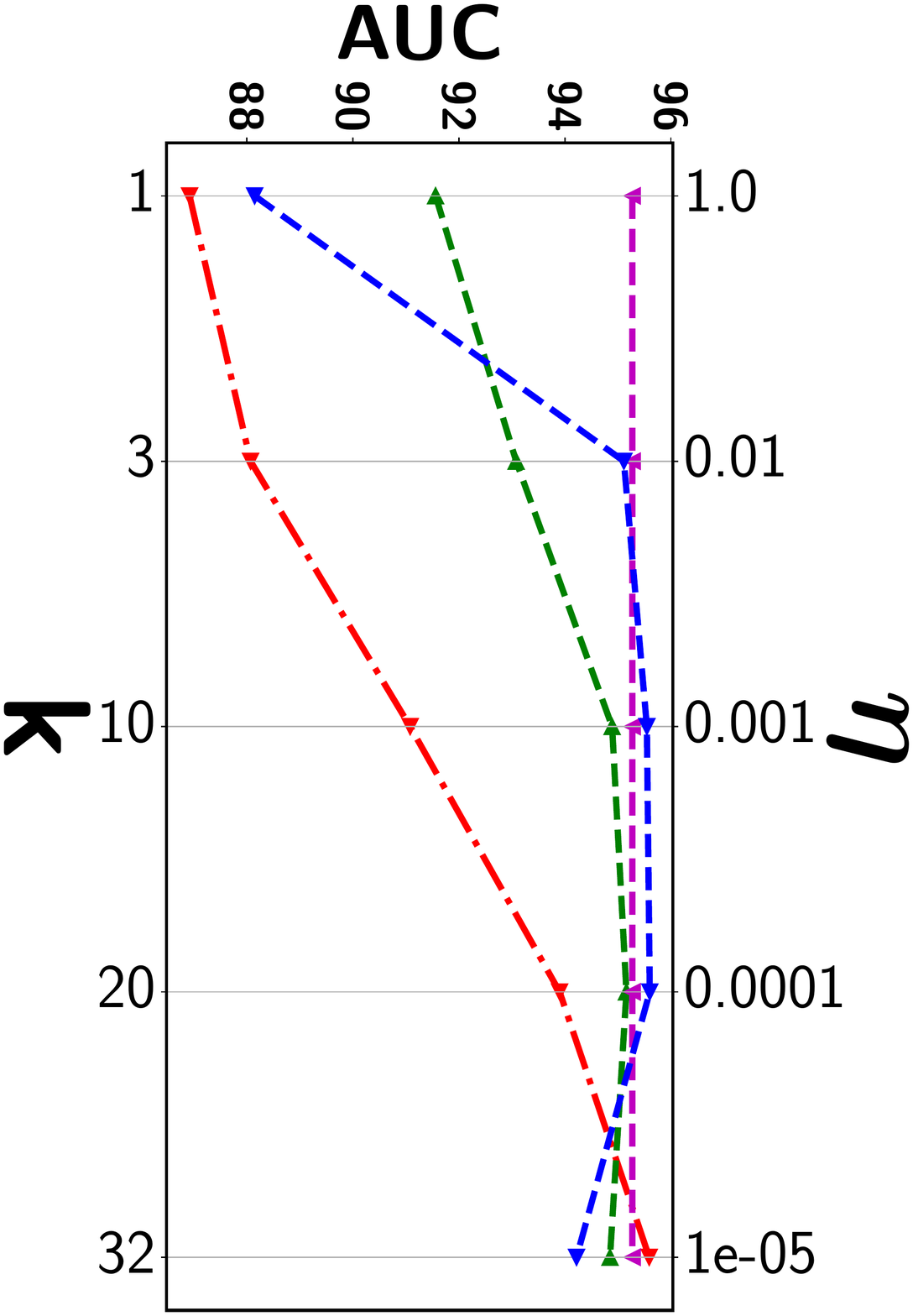}
  %\vspace{-2mm}
  \caption{Outlier}
\end{subfigure}
\begin{subfigure}{0.19\textwidth}
  \centering
  \includegraphics[angle=90,width=1\linewidth]{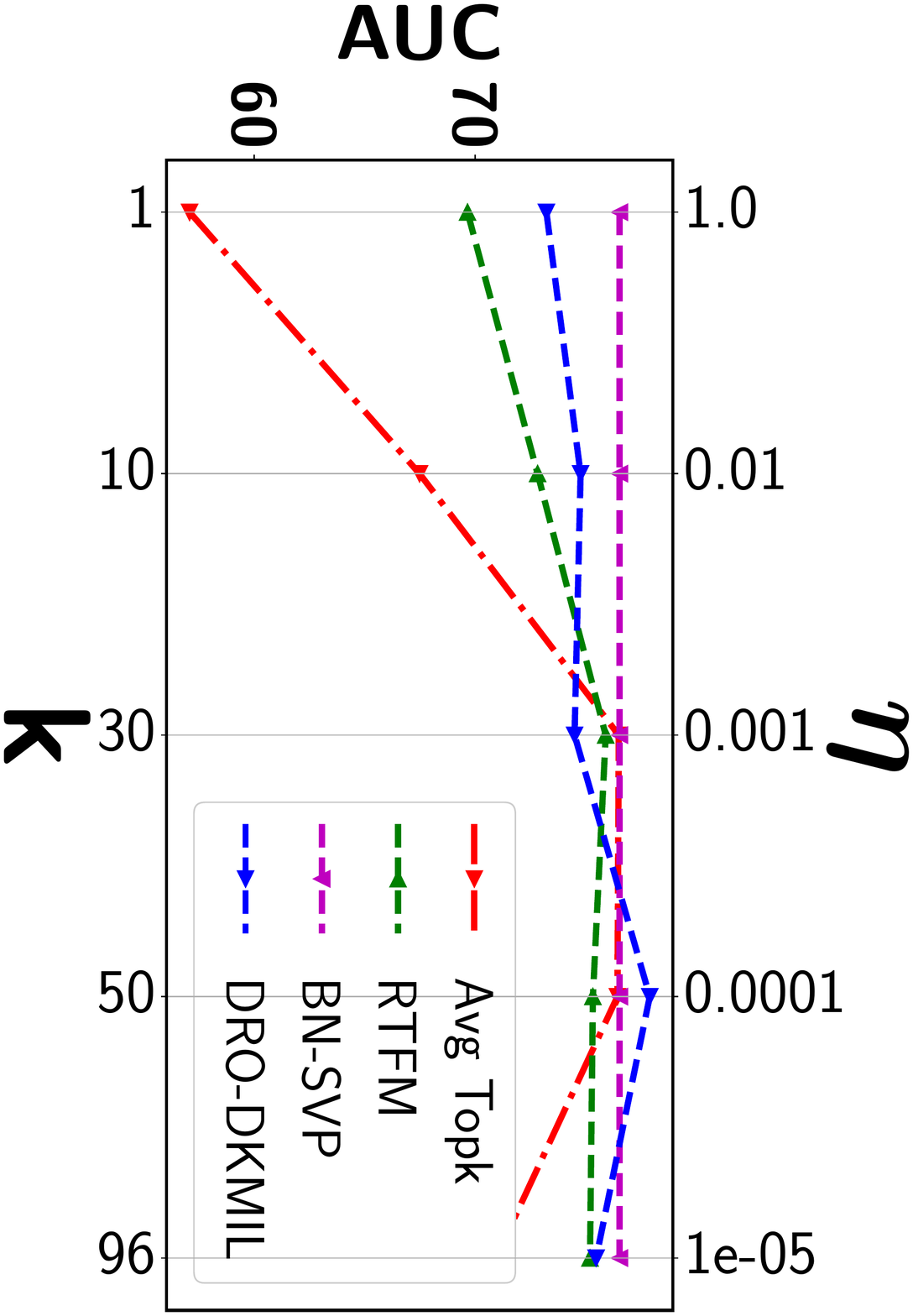}
  %\vspace{-2mm}
  \caption{Multimodal}
\end{subfigure}
\vspace{-2mm}
\caption{Performance comparison with top-$k$ ranking models}
\label{fig: auc_k_eta}
\vspace{-5mm}
\end{figure*}

For Avenue and ShanghaiTech datasets, we extract visual features from the FC7 layer of a pre-trained C3D network \cite{Tran2015}. We re-size each video frame to $240\times 340$ pixels and fix the frame rate to 30 fps. We compute the C3D features for every 16-frame video clip. This may yield a different number of clips (each clip having a 2048 dimensional feature vector) depending on the number of frames in each video. Thus, we fit any number of clips to the 32 segments by taking an average of clip features in a specific segment.  In case of UCF-Crime, we extract the features using an I3D network \cite{Carreira2017} by using the pretrained network as described in \cite{Wang2018NonlocalNN}. For all datasets, we use parallel GCN  networks to capture the feature similarity and temporal consistency. The outputs of the parallel branches are combined and passed through a 5-layer LSTM network where each layer has 32 hidden units followed by batch normalization. Finally, an FC layer with sigmoid activation is applied to bring the prediction score to $(0,1)$. For model training, we use SGD with a learning rate of $0.001$ and $l_2$ regularization with parameter $\lambda = 0.001$. Detailed information about the network architecture is provided in the Appendix.

\subsection{Performance Comparison}
\vspace{-4mm}\paragraph{Comparison with Top-$k$ Models} We first compare the detection performance with two most recent top-$k$ based models, including Robust Temporal Feature Magnitude learning (RTFM)  \cite{Tian_2021_ICCV} and the DRO based deep kernel MIL (DRO-DKMIL) \cite{Sapkota2021}. We also compare with a standard average top-$k$ model (Avg Topk) as the baseline. 
%In this section, we compare the performance of the Top-k based approaches (RTFM proposed by Tian et al. \cite{Tian_2021_ICCV}, Avg Topk) and DRO based approach (DRO-DKMIL proposed by Sapkota et al. \cite{Sapkota2021}) with the proposed BN-SVP model. 
Avg Topk uses the rank loss in \eqref{eq:avg_topk_mil} with the same network architecture as BN-SVP. For RTFM, we get the result by re-running the original implementation for different $k$ values. Similarly, for DRO-DKMIL, we run the original implementation for different $\eta$ values that control the size of the uncertainty set.
%Both Avg Topk and RTFM approaches have a critical parameter $k$ defining the number of segments to be included in the optimization from each video during optimization process. Although DRO-DKMIL removes the limitation of specifying a fix $k$ value, the approach still have a key parameter $\eta$ constraining the number of frames from each video. 
The proposed BN-SVP removes the dependency on these highly sensitive parameters through non-parametric modeling. Detailed comparison results are shown in Figure \ref{fig: auc_k_eta}.

We have several important observations. First, all the top-$k$ models are very sensitive to the selection of the $k$ value (or $\eta$ that defines a soft version of the top-$k$ set). Both RTFM and DRO-DKMIL outperform the standard Avg Topk. DRO-DKMIL achieves relatively more stable performance across all datasets. This may attribute to its conversion of discrete optimization (\ie, choose a specific $k$) to a continuous optimization problem (\ie choosing $\eta$). However, for certain dataset (\eg, Avenue), its performance still varies more than $8\%$. Second, while for some rare cases that RTFM or DRO-DKMIL achieves the best performance for a specific $k$ or $\eta$, they under-perform BN-SVP in most cases. This is mainly due to that these models tend to choose a consecutive set of segments, which limits the model's exposure of other potentially positive segments. This issue has been effectively addressed by BN-SVP, which extracts a diverse set of potentially positive segments through submodular optimization. 

%however removes the requirement of the parameters as it dynamically constructs the clusters in a nonparametric way and chooses the representative from each of those clusters. 
%Figure \ref{fig: auc_k_eta} shows the variation of the performance with respect to $k$ and $\eta$ for different techniques. 
%As shown in Figure \ref{fig: auc_k_eta}, the top-$k$ based approaches (RTFM and Avg Top-k) are very sensitive to the selection of the $k$ value. Although compared to the top-$k$ based approach, the DRO-DKMIL is more stable still the approach has steep curve in the dataset like Avenue, Outlier. In both cases, the parameter should be carefully chosen to get the optimal performance. Furthermore, the parameter chosen using the optimal performance on the validation set may not work well in the deployment process and therefore limiting the applicability. In constrast to those approaches, the proposed BN-SVP provides the performance very close with the other approaches optimal performance and does not require the critical parameter to be tuned. This justifies the effectiveness of the proposed approach. 

\vspace{-2mm}\paragraph{Comparison with Other Models}
%As demonstrated in the previous subsections, the top-$k$ and DRO based approaches are sensitive to the key parameters considered in their respective approaches. Therefore, 
%in this section, 
We also make comparison with other existing techniques that do not depend on the $k$ value. Specifically, our comparison study includes the maximum score based MIL model (MMIL) by Sultani et al. \cite{Sultani2018}, attention based deep MIL model proposed by  Ilse et al. \cite{Ilse2018AttentionbasedDM}, a dictionary based approach proposed by Lu et al. \cite{Lu2013},  and an MIL model for soft bags (MILS) proposed by Li \& Vasconcelos \cite{Li2015} as common baselines for all datasets. Sultani et al. \cite{Sultani2018} used the loss function in \eqref{eq:max_mil} along with the temporal similarity and consistency as a regularizer. %In addition, we also make a comparison with the Maximum based MIL (MMIL) with the exact network architecture considered in our proposed BN-SVP.  
Ilse et al. used a permutation invariant aggregation function to detect the positive instances in the bag, where the function operators are learned using the attention network \cite{Ilse2018AttentionbasedDM}. Li \& Vasconcelos used a large-margin based latent support vector machine model with the goal to correctly classify positive and negative bags \cite{Li2015}. In case of the approach presented by Zhong et al. \cite{Zhong2019}, we directly report the performance from the original paper for the UCF-Crime and ShanghaiTech datasets. This approach involves multiple rounds of alternative optimization between classification and cleaning and may produce unstable
performance \cite{Tian_2021_ICCV}. Considering its difficulty in the training and replication process, we do not include it in other datasets.

Table~\ref{tab:auc_three_dataset} reports the AUC scores of BN-SVP along with the results from the comparison models as described above. It can be seen that BN-SVP clearly outperforms other models in all datasets and a large margin (\ie, $6$-$8\%$) is achieved on both ShanghaiTech and Avenue datasets. The corresponding ROC curves are shown in Figure~\ref{fig: roc_auc_sanghaitech_ucfcrime}, which demonstrates a consistent trend.  For example, on UCF-Crime, BN-SVP has a more than $10\%$ better True Positive Rate (TPR) compared to MMIL  at a False Positive Rate (FPR) of 0.2. Also at varying FPR, BN-SVP consistently outperforms the other competitive baselines, which justifies its outstanding detection capability.

\begin{table}[t!]
\caption{Comparison with Other Models}
\vspace{-6mm}
\begin{center}
\footnotesize
\begin{tabular}{|p{6cm}|c|}
\hline
Approach & AUC (\%) \\
\hline
\multicolumn{2}{|c|}{\sc UCF-Crime} \\
\hline
Hasan et al. \cite{Hasan2016} (C3D) & $50.60$ \\
\hline
Lu et al. \cite{Lu2013} (C3D) & $65.51$ \\
\hline
Lu et al. \cite{Lu2013} (I3D) & $61.98$ \\
\hline
MMIL \cite{Sultani2018} (C3D) & $75.41$ \\
\hline 
Li \& Vasconcelos \cite{Li2015} (I3D) & $77.95$ \\
\hline
Ilse et al. \cite{Ilse2018AttentionbasedDM} (I3D) & $76.52$ \\
\hline
Zhong et al. \cite{Zhong2019} (GCN (C3D)) & $81.08$ \\
\hline
Zhong et al. \cite{Zhong2019} ($\text{TSN}^\text{RGB}$) & $82.12$ \\
\hline
Zhong et al. \cite{Zhong2019} ($\text{TSN}^\text{OpticalFlow}$) & $78.08$ \\
\hline
MMIL  \cite{Sultani2018} (I3D) & $79.68$ \\
\hline
\textbf{BN-SVP (I3D)} & \textbf{83.39} \\
\hline
\multicolumn{2}{|c|}{\sc ShanghaiTech} \\
\hline
Lu et al. \cite{Lu2013} (C3D)  & $72.90$ \\
\hline
Li \& Vasconcelos \cite{Li2015} (C3D) & $90.40$ \\
\hline
Zhong et al. \cite{Zhong2019} (GCN (C3D)) & $76.44$ \\
\hline
Zhong et al. \cite{Zhong2019} ($\text{GCN} (\text{TSN}^\text{RGB})$) & $84.44$ \\
\hline
Zhong et al. \cite{Zhong2019} ($\text{GCN} (\text{TSN}^\text{Optical Flow})$) & $84.13$ \\
\hline
Ilse et al. \cite{Ilse2018AttentionbasedDM} (C3D) & $85.78$ \\
\hline
MMIL \cite{Sultani2018} (C3D)  & $92.18$ \\
\hline
\textbf{BN-SVP (C3D)} & $\textbf{96.00}$ \\
\hline
\multicolumn{2}{|c|}{\sc Avenue} \\
\hline
Binary SVM (C3D) & $69.11$\\
\hline
Lu et al. \cite{Lu2013} (C3D) & $62.14$ \\
\hline
Li \& Vasconcelos \cite{Li2015} (C3D) & $72.23$ \\
\hline
Ilse et al. \cite{Ilse2018AttentionbasedDM} (C3D) & $72.39$ \\
\hline
MMIL \cite{Sultani2018} & $70.40$  \\
\hline
\textbf{BN-SVP (C3D)}  & $\textbf{80.87}$ \\
\hline
\end{tabular}
\label{tab:auc_three_dataset}
\end{center}
\vspace{-7mm}
\end{table}

\begin{figure*}[t!]
\vspace{-3mm}
\centering
\begin{subfigure}{0.19\textwidth}
  \centering
  \includegraphics[angle=90,width=\linewidth]{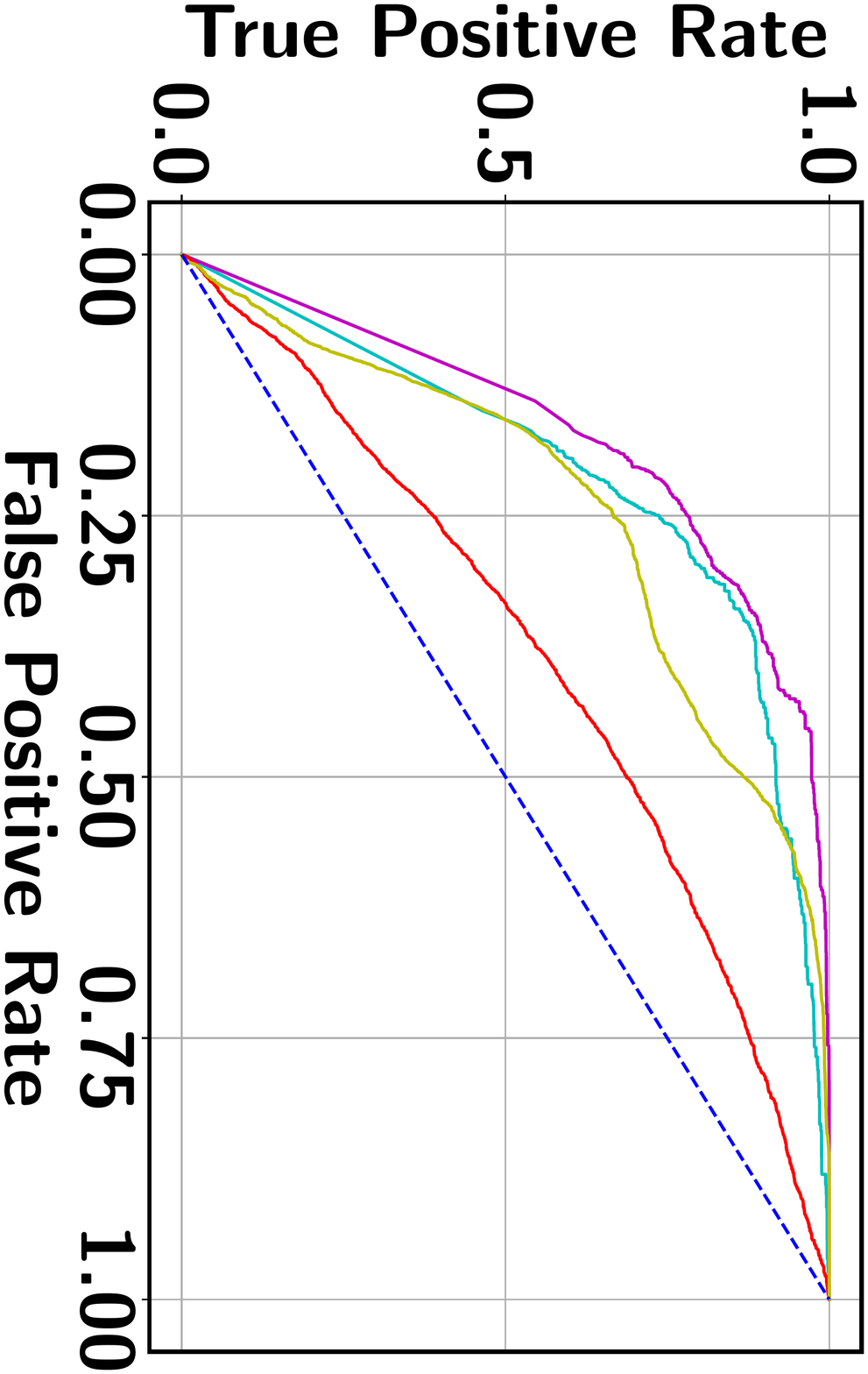}
  \vspace{-4mm}
  \caption{UCF-Crime}
\end{subfigure}
\begin{subfigure}{.19\textwidth}
  \centering
  \includegraphics[angle=90,width=\linewidth]{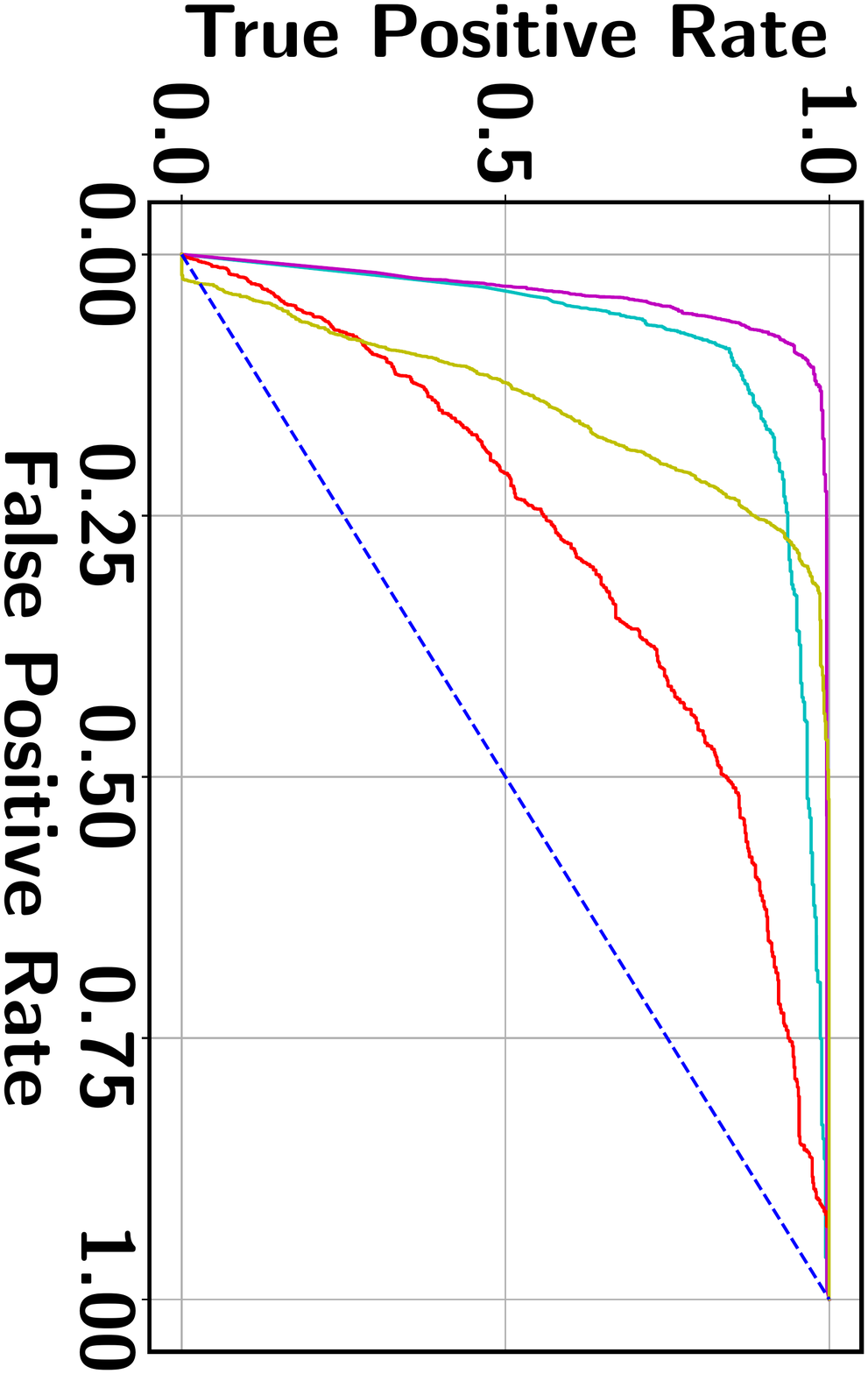}
  \vspace{-4mm}
  \caption{ShanghaiTech}
\end{subfigure}%
\begin{subfigure}{0.19\textwidth}
  \centering
  \includegraphics[angle=90,width=\linewidth]{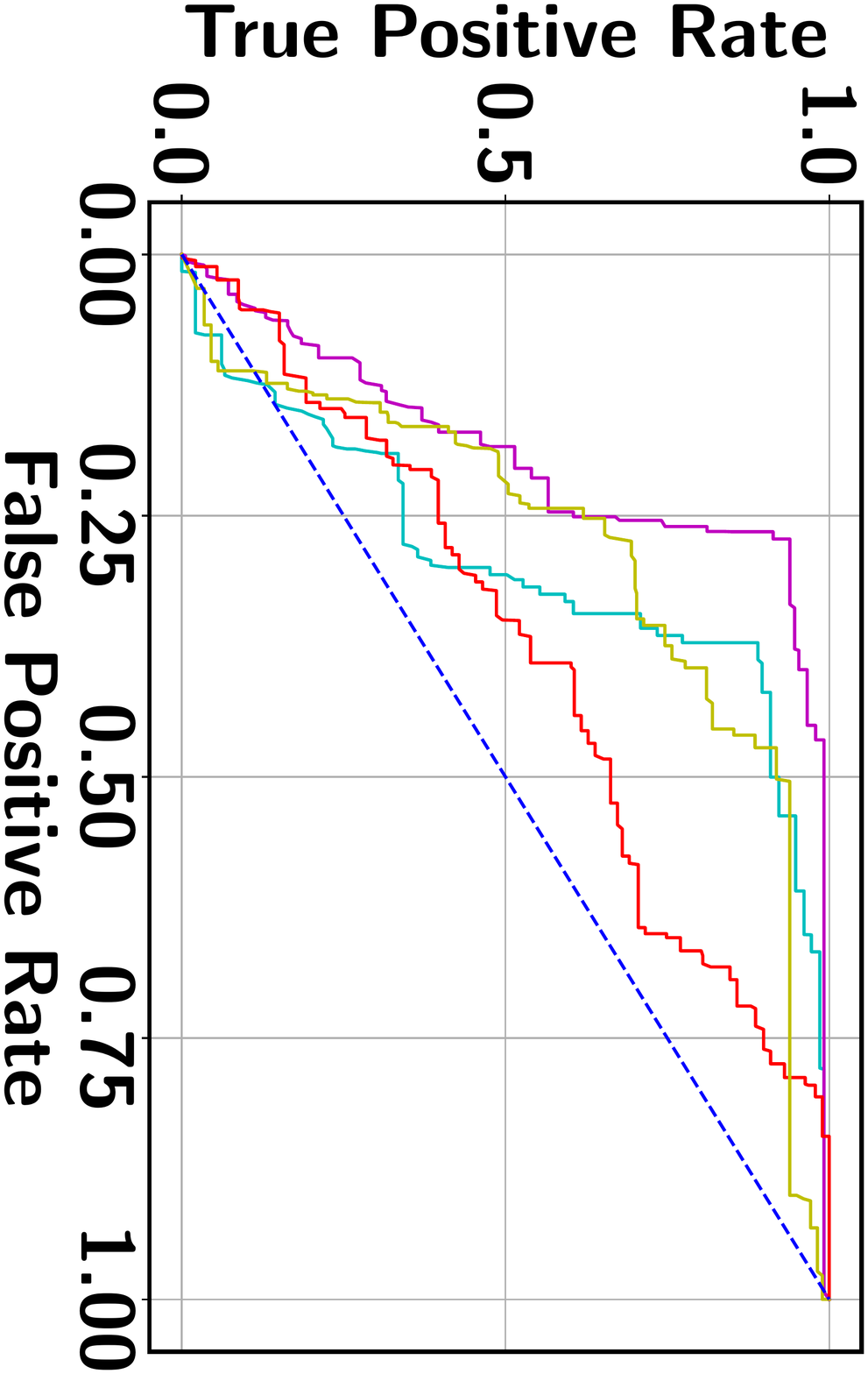}
  \vspace{-4mm}
  \caption{Avenue}
\end{subfigure}
\begin{subfigure}{0.19\textwidth}
  \centering
  \includegraphics[angle=90,width=\linewidth]{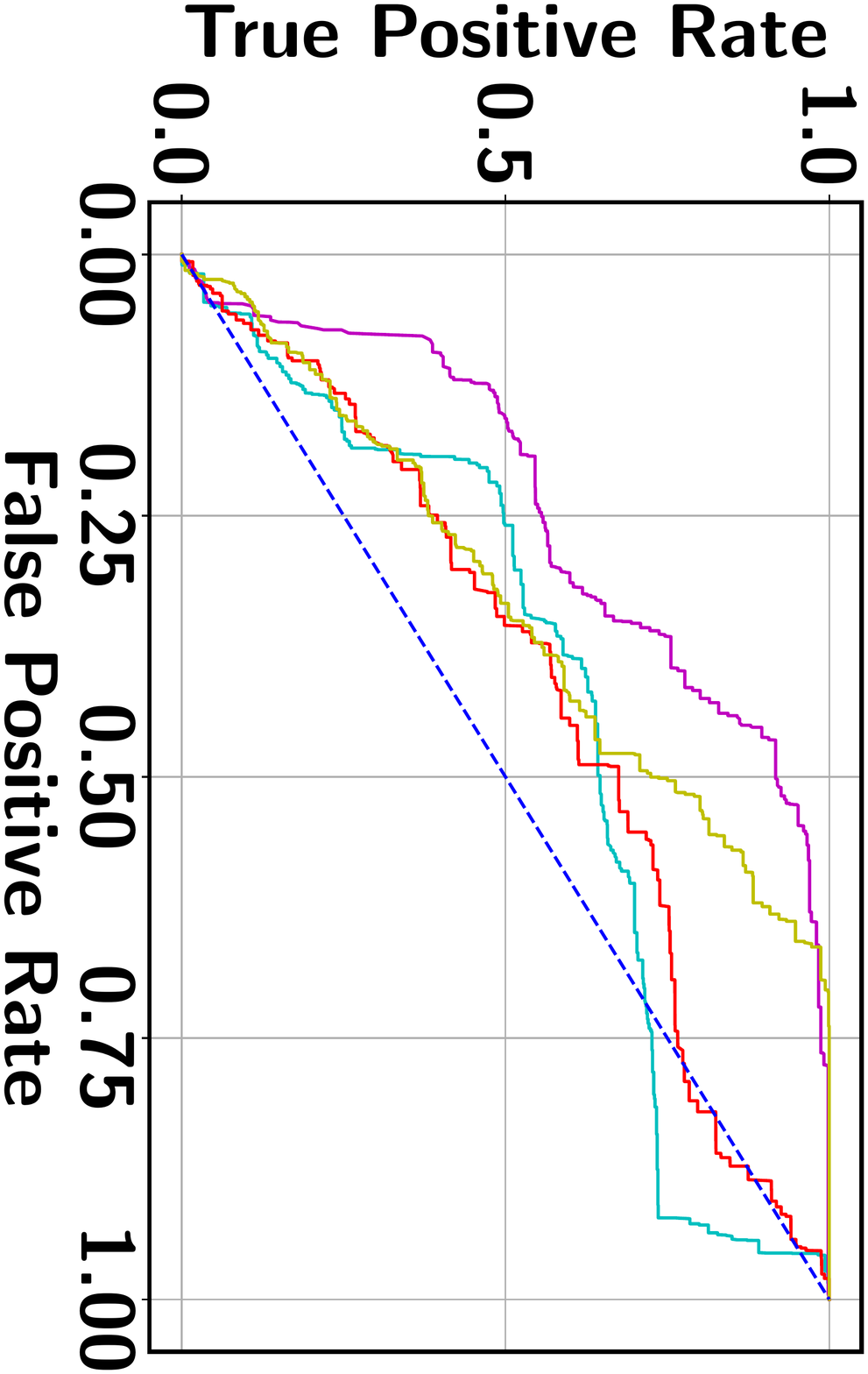}
  \vspace{-4mm}
  \caption{Multimodal (UCF)}
\end{subfigure}
\begin{subfigure}{0.19\textwidth}
  \centering
  \includegraphics[angle=90,width=\linewidth]{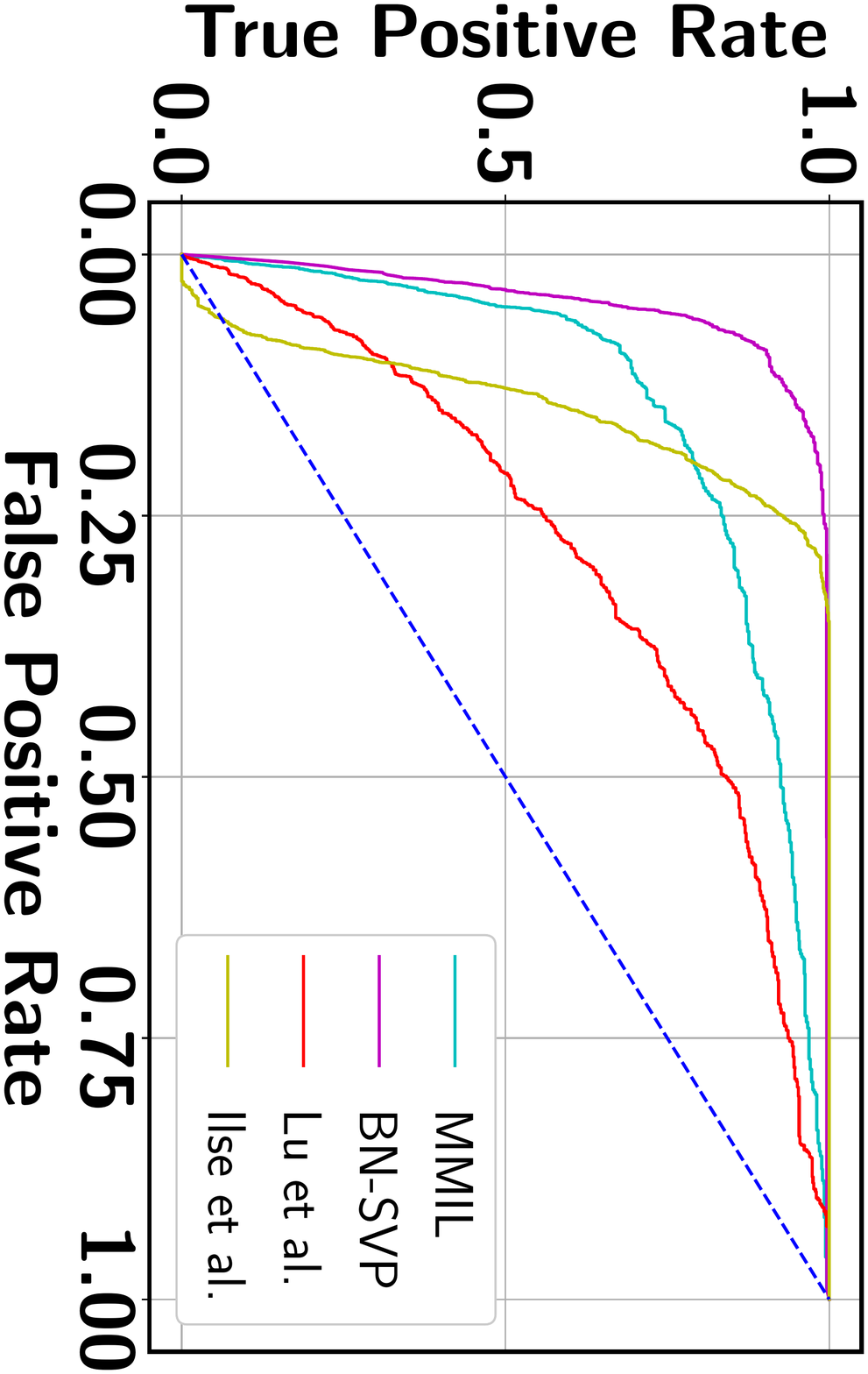}
  \vspace{-4mm}
  \caption{Outlier (ShanghaiTech)}
\end{subfigure}
\vspace{-2mm}
\caption{ROC curves on three video datasets (a)-(c), multimodal (d) and outlier (e)} %\comm{Did you smooth the first two?  }} \hitesh{I used maximum of 10 runs for each k value for the AUC performance plot. I did not use any smoothing mechanism}
\vspace{-4mm}
\label{fig: roc_auc_sanghaitech_ucfcrime}
\end{figure*}

%\paragraph{SanghaiTech}
%Performance comparison of the BN-SVP with other baselines is presented in the Table~\ref{tab:auc_three_dataset} for the SanghaiTech dataset. As shown the proposed technique has superior performance compared to the other techniques. Also, the ROC curve shown in Figure~\ref{fig: roc_auc_sanghaitech_ucfcrime} (b) justifies that the proposed approach is being able to achieve the very high TPR even for the low FPR. Further, the BN-SVP roc curve stays on the top which justifies the effectiveness in this dataset.

%\paragraph{Avenue}
%Table~\ref{tab:auc_three_dataset} shows the quantitative performance with ROC curve  in Figure~\ref{fig: roc_auc_sanghaitech_ucfcrime} (c) for the Avenue Dataset. As shown the approach has a much better performance compared to the other competitive approaches.

\subsection{Detecting Multimodal and Outlier Segments}\label{sec:multimodal}

\vspace{-3mm}\paragraph{Multimodal Detection} The original UCF-Crime dataset does not explicitly consider a multimodal scenario. Even though the real-world surveillance videos may indeed contain those cases (which is evidenced by the superior performance of the BN-SVP model), it is hard to identify actual videos for this specific information. In UCF-Crime dataset, different types of anomalies are present. This allows us to explicitly create multimodal scenarios by combining multiple abnormal videos from different activity types. To this end, we randomly select three activity types and form an abnormal bag by concatenating three abnormal videos, one video per activity type. The training bags are constructed using the training dataset whereas testing bags are constructed using the testing dataset. In total, we construct 50 abnormal and 50 normal training bags. In the testing set, there are 10 normal and 10 abnormal videos. Table~\ref{tab: result_multimodal_outlier} shows the AUC scores and corresponding ROC curve is shown in the Figure~\ref{fig: roc_auc_sanghaitech_ucfcrime} (d). BN-SVP achieves a more superior performance compared to other baselines. Furthermore, BN-SVP stays consistently on the top in the ROC curve justifying the effectiveness of the approach toward the multimodal scenario. As an example, at $\text{FPR} = 0.1$, BN-SVP is at least $20\%$ better  than other approaches on TPR. 

\vspace{-2mm}\paragraph{Outlier Detection} To assess the robustness on outlier detection, we extend the ShanghaiTech dataset with outliers. Specifically, we randomly select 120 segments from abnormal videos and replace their features with points drawn from a standard multivariate Gaussian distribution. As shown in Table~\ref{tab: result_multimodal_outlier}, MMIL suffers heavily by the outliers compared to the proposed BN-SVP. This is because, it is likely to have an outlier prediction as the maximum prediction score from the abnormal video. As a result, the overall optimization process may be heavily influenced by outliers.

\begin{table}[t!]
   
\small
\begin{center}
 \caption{AUC Scores on Multimodal and Outlier Detection}
   \vspace{-3mm}
\label{tab: result_multimodal_outlier}
\begin{tabular}{|p{4.0cm}|c|c|c|}
\hline
\textbf{Approach}&\multicolumn{2}{|c|}{\textbf{AUC ($\%$)}} \\
\cline{2-3} 
\textbf{} & \textbf{\textit{Multimodal}}& \textbf{\textit{Outlier}} \\
\hline
Lu et al. \cite{Lu2013} (C3D) & $58.67$ & $72.90$  \\
\hline
Li \& Vasconcelos \cite{Li2015} (C3D) & $70.96$ & $90.95$  \\
\hline
Ilse et al. \cite{Ilse2018AttentionbasedDM} (C3D) & $66.85$ & $85.65$  \\
\hline
MMIL \cite{Sultani2018}  & $57.08$ & $86.47$ \\
\hline
\textbf{BN-SVP} & $\textbf{76.53}$ & $\textbf{95.27}$ \\
\hline
\end{tabular}
\end{center}
\vspace{-7mm}
\end{table}

\subsection{Qualitative Analysis}
\vspace{-2mm}
\begin{figure}[t!]
\centering
\begin{subfigure}{0.22\textwidth}
  \centering
  \includegraphics[width=.8\linewidth]{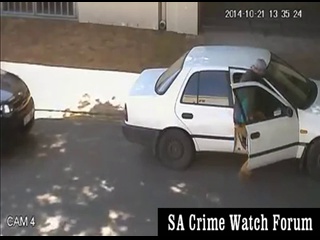}
  %\vspace{-4mm}
  \caption{Frame 1}
\end{subfigure}
\begin{subfigure}{0.22\textwidth}
  \centering
  \includegraphics[width=.8\linewidth]{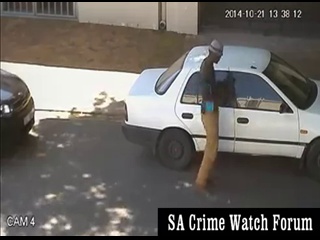}
  %\vspace{-4mm}
  \caption{Frame 2}
\end{subfigure}
\vspace{-2mm}
\caption{Frames from UCF-Crime Stealing019; (a) Correct BN-SVP, Avg Topk, (b) Correct BN-SVP, Incorrect Avg Topk  }
\label{fig: qualitative_ucfcrime}
\vspace{-7mm}
\end{figure}
 
To show the effectiveness of extracting a diverse set of segments for model training, we present illustrative sample frames in a stealing video from UCF-Crime, where BN-SVP correctly identifies all abnormal frames and a top-$k$ approach (\eg, Avg Topk) misses some of them. In Figure~\ref{fig: qualitative_ucfcrime}, both frames are of abnormal types and but they occur in two distinct time intervals within the video. The first frame is more obvious for a stealing event. Consequently,  both the proposed BN-SVP and Avg Topk are able to correctly identify it. In contrast, the second frame is less obvious for a stealing activity. Nevertheless, it is still chosen by BN-SVP due to its diverse coverage of potentially abnormal frames during the training process. 
%As our approach is exposed to even less obious one by picking the diverese types of abnormal frames, it is likely to see the second type of abnormality in the training phase. As a result our approach is being able to correctly predict it as abnormal. 
On the other hand, Avg Topk only focuses on frames with high prediction scores that are usually co-located in the same time interval. This will narrow the scope of the model being exposed to other abnormal frames.  
Therefore, Avg Topk is not able to correctly predict the second frame and falsely classify it as normal. More qualitative analysis that demonstrates the robustness of the proposed approach on multimodal and outlier scenarios is provided in the Appendix along with an ablation study for the prediction score threshold $\epsilon$ defined in \eqref{eq:score-selection}.
\vspace{-2mm}
\section{Conclusion}
\vspace{-2mm}
In this paper, we propose a novel Bayesian non-parametric submodularity diversified MIL model for robust video anomaly detection in practical settings that involve outlier and multimodal scenarios. By integrating submodular optimization with the minimization of an MIL loss, the proposed approach identifies a diverse set of segments to ensure comprehensive coverage of all potential positive segments for effective model training. The Bayesian non-parametric construction of the submodular set function automatically determines the upper bound on the size of the diverse set, which serves as a key constraint for minimizing the submodularity diversified MIL loss function. The resulting state-component structure also leads to a greedy submodular optimization algorithm to support efficient model training.  The effectiveness of the proposed approach is demonstrated through the state-of-the-art robust anomaly detection performance on real-world surveillance videos with noisy and multimodal scenarios. 
\vspace{-2mm}
\section*{Acknowledgement}
\vspace{-2mm}
This research was supported in part by an NSF IIS award IIS-1814450 and an ONR award N00014-18-1-2875. The views and conclusions contained in this paper are those of the authors and should not be interpreted as representing any funding agency.

%Instead of choosing a consecutive set of segments with highest scores from a  sub-sequence of , the specially designed submodular function can include more diverse set of instances and expose the model to all potentially abnormal instances and thereby performing the highly effective model training. More specifically, the submodular set function constructed using the infinite Hidden Markov Model with Hierarchical Dirichlet Prior (HDP-HMM) augmented with self-transition is used to effectively cluster the segments in the non-parametric way and most representative segment is chosen from each cluster.  The effectiveness of the proposed approach is shown through the state-of-the-art robust anomaly detection performance on real-world surveillance videos with noisy and multimodal scenarios.

\newpage
\bibliographystyle{ieee_fullname}
\bibliography{macros,main}

\begin{thebibliography}{10}\itemsep=-1pt

\bibitem{Carreira2017}
J. {Carreira} and A. {Zisserman}.
\newblock Quo vadis, action recognition? a new model and the kinetics dataset.
\newblock In {\em CVPR}, pages 4724--4733, 2017.

\bibitem{Cong2011}
Y. {Cong}, J. {Yuan}, and J. {Liu}.
\newblock Sparse reconstruction cost for abnormal event detection.
\newblock In {\em CVPR 2011}, pages 3449--3456, 2011.

\bibitem{Fan2017}
Yanbo Fan, Siwei Lyu, Yiming Ying, and Baogang Hu.
\newblock Learning with average top-k loss.
\newblock In I. Guyon, U.~V. Luxburg, S. Bengio, H. Wallach, R. Fergus, S.
  Vishwanathan, and R. Garnett, editors, {\em NIPS}, pages 497--505. Curran
  Associates, Inc., 2017.

\bibitem{Fox2011ASH}
Emily~B. Fox, Erik~B. Sudderth, Michael~I. Jordan, and Alan~S. Willsky.
\newblock A sticky hdp-hmm with application to speaker diarization.
\newblock {\em The Annals of Applied Statistics}, 5:1020--1056, 2011.

\bibitem{Hasan2016}
M. {Hasan}, J. {Choi}, J. {Neumann}, A.~K. {Roy-Chowdhury}, and L.~S. {Davis}.
\newblock Learning temporal regularity in video sequences.
\newblock In {\em CVPR}, pages 733--742, 2016.

\bibitem{He2018}
Chengkun He, Jie Shao, and Jiayu Sun.
\newblock An anomaly-introduced learning method for abnormal event detection.
\newblock {\em Multimedia Tools Appl.}, 77(22):29573–29588, Nov. 2018.

\bibitem{Ilse2018AttentionbasedDM}
Maximilian Ilse, Jakub~M. Tomczak, and Max Welling.
\newblock Attention-based deep multiple instance learning.
\newblock In {\em ICML}, 2018.

\bibitem{Kipf:2016tc}
Thomas~N. Kipf and Max Welling.
\newblock {Semi-Supervised Classification with Graph Convolutional Networks}.
\newblock In {\em Proceedings of the 5th International Conference on Learning
  Representations}, ICLR '17, 2017.

\bibitem{Kratz2009}
L. {Kratz} and K. {Nishino}.
\newblock Anomaly detection in extremely crowded scenes using spatio-temporal
  motion pattern models.
\newblock In {\em CVPR}, pages 1446--1453, 2009.

\bibitem{Krause08optimizingsensing}
Andreas Krause.
\newblock Optimizing sensing: Theory and applications.
\newblock Technical report, 2008.

\bibitem{Krause2008submodular}
Andreas Krause, H.~Brendan McMahan, Carlos Guestrin, and Anupam Gupta.
\newblock Robust submodular observation selection.
\newblock {\em Journal of Machine Learning Research}, 9(93):2761--2801, 2008.

\bibitem{Li2015AnomalyDI}
Nannan Li, Xinyu Wu, Huiwen Guo, Dan Xu, Yongsheng Ou, and Yen-Lun Chen.
\newblock Anomaly detection in video surveillance via gaussian process.
\newblock {\em Int. J. Pattern Recognit. Artif. Intell.},
  29:1555011:1--1555011:25, 2015.

\bibitem{Li2015}
W. {Li} and N. {Vasconcelos}.
\newblock Multiple instance learning for soft bags via top instances.
\newblock In {\em CVPR, 2015}, pages 4277--4285, 2015.

\bibitem{lin2009select}
Hui Lin and Jeff Bilmes.
\newblock How to select a good training-data subset for transcription:
  Submodular active selection for sequences.
\newblock Technical report, WASHINGTON UNIV SEATTLE DEPT OF ELECTRICAL
  ENGINEERING, 2009.

\bibitem{Liu2019MarginLE}
Wen Liu, Weixin Luo, Zhengxin Li, Peilin Zhao, and Shenghua Gao.
\newblock Margin learning embedded prediction for video anomaly detection with
  a few anomalies.
\newblock In {\em IJCAI}, 2019.

\bibitem{Lu2013}
C. {Lu}, J. {Shi}, and J. {Jia}.
\newblock Abnormal event detection at 150 fps in matlab.
\newblock In {\em ICCV}, pages 2720--2727, 2013.

\bibitem{Luo2017}
W. {Luo}, W. {Liu}, and S. {Gao}.
\newblock A revisit of sparse coding based anomaly detection in stacked rnn
  framework.
\newblock In {\em ICCV}, pages 341--349, 2017.

\bibitem{Sapkota2021}
Hitesh Sapkota, Yiming Ying, Feng Chen, and Qi Yu.
\newblock Distributionally robust optimization for deep kernel multiple
  instance learning.
\newblock In Arindam Banerjee and Kenji Fukumizu, editors, {\em Proceedings of
  The 24th International Conference on Artificial Intelligence and Statistics},
  volume 130 of {\em Proceedings of Machine Learning Research}, pages
  2188--2196. PMLR, 13--15 Apr 2021.

\bibitem{Sultani2018}
W. {Sultani}, C. {Chen}, and M. {Shah}.
\newblock Real-world anomaly detection in surveillance videos.
\newblock In {\em CVPR}, pages 6479--6488, 2018.

\bibitem{teh2006hierarchical}
Yee~Whye Teh, Michael~I Jordan, Matthew~J Beal, and David~M Blei.
\newblock Hierarchical dirichlet processes.
\newblock {\em Journal of the american statistical association},
  101(476):1566--1581, 2006.

\bibitem{TianZ0G19}
Kai Tian, Shuigeng Zhou, Jianping Fan, and Jihong Guan.
\newblock Learning competitive and discriminative reconstructions for anomaly
  detection.
\newblock In {\em AAAI}, pages 5167--5174. {AAAI} Press, 2019.

\bibitem{Tian_2021_ICCV}
Yu Tian, Guansong Pang, Yuanhong Chen, Rajvinder Singh, Johan~W. Verjans, and
  Gustavo Carneiro.
\newblock Weakly-supervised video anomaly detection with robust temporal
  feature magnitude learning.
\newblock In {\em Proceedings of the IEEE/CVF International Conference on
  Computer Vision (ICCV)}, pages 4975--4986, October 2021.

\bibitem{Tran2015}
Du Tran, Lubomir Bourdev, Rob Fergus, Lorenzo Torresani, and Manohar Paluri.
\newblock Learning spatiotemporal features with 3d convolutional networks.
\newblock In {\em ICCV}, ICCV ’15, page 4489–4497, USA, 2015. IEEE Computer
  Society.

\bibitem{Vu2019RobustAD}
Hung Vu, T. Nguyen, Trung Le, Wei Luo, and Dinh~Q. Phung.
\newblock Robust anomaly detection in videos using multilevel representations.
\newblock In {\em AAAI}, 2019.

\bibitem{Wang2018NonlocalNN}
X. Wang, Ross~B. Girshick, Abhinav Gupta, and Kaiming He.
\newblock Non-local neural networks.
\newblock {\em 2018 IEEE/CVF Conference on Computer Vision and Pattern
  Recognition}, pages 7794--7803, 2018.

\bibitem{Huan2015}
Huan Yang, Baoyuan Wang, Stephen Lin, David Wipf, Minyi Guo, and Baining Guo.
\newblock Unsupervised extraction of video highlights via robust recurrent
  auto-encoders.
\newblock In {\em ICCV}, ICCV ’15, page 4633–4641, USA, 2015. IEEE Computer
  Society.

\bibitem{Zhong2019}
J. {Zhong}, N. {Li}, W. {Kong}, S. {Liu}, T.~H. {Li}, and G. {Li}.
\newblock Graph convolutional label noise cleaner: Train a plug-and-play action
  classifier for anomaly detection.
\newblock In {\em CVPR}, pages 1237--1246, 2019.

\end{thebibliography}

\newpage
\appendix
\onecolumn

\begin{center}
    {\large \bf Appendix}
\end{center}

\paragraph{Organization of Appendix}
In this appendix,  we first summarize the major notations used in the paper in Table~\ref{tab: symbol_table} of Appendix~\ref{app:notations}. We then provide the detailed proof of Theorem 1 in Appendix~\ref{sec:proofs}. Next, we provide additional experimental results in Appendix~\ref{sec:additional_experiments}. Finally, the link to the source code is provided in Appendix~\ref{sec:link_source_code}.

\section{Summary of Notations}\label{app:notations}

Table~\ref{tab: symbol_table} summarizes all the major symbols along with their descriptions.
\begin{table*}[htbp]
%\vspace{-5mm}
\caption{Symbols with Descriptions}
\vspace{-3mm}
\small
\begin{center}
\begin{tabular}{|c|c|}
\hline
Notation & Description \\
\hline
$\mathcal{B}_{pos}$ & Positive bag (video) \\
\hline
$\mathcal{B}_{neg}$ & Negative bag (video) \\
\hline
$n$ & Number of segments in each bag \\
\hline
$\textbf{x}_i^+$ & Segment in a positive bag \\
\hline
${\bf x}_{[i]}^+$ & $i^{th}$ largest prediction segment in a positive bag \\
\hline
$\textbf{x}_j^-$ & Segment in a negative bag \\
\hline
$M$ & Feature dimension of each video segment \\
\hline
\textbf{w} & Network parameters \\
\hline
b & Network bias \\
\hline
$k$ & Number of segments considered in the top-$k$ formulation \\
\hline

$\eta$ & Learning rate \\
\hline
%$\delta$ & Similarity threshold \\
%
%\hline
$\mathcal{C}^+$ & Set of instances from positive bag involve in model training \\
\hline
$\mathcal{G}_0$ & Base distribution in DP \\
\hline
$\gamma$ & Concentration parameter for the distribution $\mathcal{G}_0$ \\
\hline
$\beta_k$ & Weight associated with the $k^{th}$ atom \\
\hline
$\phi_k$ & Atom $k$ drawn from the distribution $H$ \\
\hline
$\mathcal{G}_j$ & Transition probability distribution of $j^{th}$ state \\
\hline
$\hat{\pi}_{jl}$ & Stick breaking weight associated with $l^{th}$ atom in $j^{th}$ group \\
\hline
$\alpha$ & Concentration parameter for $\hat{\pi}_{j}$ \\
\hline
$\phi_{jl}$ & $l^{th}$ atom corresponding $j^{th}$ group \\
\hline
$\beta_k$ & Stick breaking weight corresponding to atom $\phi_k$ \\
\hline
$\gamma$ & Concentration parameter for $\beta_k$ \\
\hline
$\rho$ & Parameter defining the self transitioning \\
\hline
$z_i$ & Scene assignment for the $i^{th}$ segment in a video \\
\hline
$s_i$ & Mixture component assignment for the $i^{th}$ segment in a video \\
\hline
$\mathcal{N}$ & Multivariate Gaussian distribution \\
\hline
$\boldsymbol{\mu}_{k, t}$ & Mean of the $k^{th}$ state, $t^{th}$ mixture component \\
\hline
$\Sigma_{k, t}$ & Covariance of $k^{th}$ state, $t^{th}$ mixture component \\
\hline
$S_{i, j}$ & Pairwise similarity between $i^{th}$ and $j^{th}$ segments \\
\hline
$F(\mathcal{C})$ & Submodular set function \\
\hline
$f_s^*$ & Maximum output score among segments assigned to the same cluster \\
\hline
$i_s^*$ & Index of the representative segment \\
\hline
$\widehat{{\mathcal C}^+}$ & Representative set constructed using the greedy algorithm \\
\hline
$\epsilon$ & Threshold to exclude segments with low prediction score from the representative set \\
\hline
$\kappa$ & Upper bound of number of representative segments \\

\hline

\end{tabular}
\label{tab: symbol_table}
\end{center}
\vspace{-4mm}
\end{table*}

\section{Proof of Theorem 1}
\label{sec:proofs}
In this section, we provide the detailed proof of Theorem 1. We first show that the representative set based MIL loss given by \eqref{eq:representative_topk_mil} is equivalent to the submodularity diversified MIL loss given by Equation \eqref{eq: sub_cons_avg_topk} with a specific $\lambda$ to balance the MIL loss and the diversity of the set. We then show that greedy algorithm to locate the $\kappa$ representative segments provides a $\kappa$-constrained greedy approximation to the maximization of the submodular set function $F(C)$ with the solution to be no worse than $(1-e^{-1})$ of the optimal solution.

\paragraph{Proof of representative set based MIL loss in \eqref{eq:representative_topk_mil} is a special case of the submodular diversified MIL loss in \eqref{eq: sub_cons_avg_topk}} We first present a lemma, which is used in the proof. 

\begin{lemma}
Assume that $\widetilde{\mathcal{C}^+}$ with size $\kappa$ is a solution that maximizes $F(\mathcal{C})$ in \eqref{eq: submodular_function}. Then, $\widetilde{\mathcal{C}^+}$ should contain one segment from each mixture component (\ie sub-scene).
\end{lemma}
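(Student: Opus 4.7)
The plan is to exploit the block structure of the pairwise similarity $S_{i,j}$ defined in \eqref{eq:similarity}: $S_{i,j}$ is non-zero only when segments $i$ and $j$ share both the scene assignment $z$ and the mixture-component assignment $s$. Let $C_1,\dots,C_\kappa$ denote the $\kappa$ unique mixture-component clusters in $\mathcal{B}_{pos}$. The block structure makes the objective decomposable cluster-wise,
\begin{equation*}
    F(\mathcal{C}) = \sum_{k=1}^{\kappa} F_k(\mathcal{C}\cap C_k), \qquad F_k(\mathcal{C}_k) = \sum_{i\in C_k} \max_{j\in\mathcal{C}_k} S_{i,j},
\end{equation*}
with the convention $F_k(\emptyset) = 0$. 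In particular, any cluster that is not represented in $\mathcal{C}$ contributes nothing, and the marginal value of placing the first segment into an empty cluster equals the full intra-cluster sum $\sum_{i\in C_k}S_{i,j}$, which is strictly positive because each $\Sigma^{-1}_{k,s}$ is positive definite.

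Next I would argue by contradiction using a swap. Suppose $\widetilde{\mathcal{C}^+}$ maximizes $F$ with $|\widetilde{\mathcal{C}^+}| = \kappa$ but fails to cover some mixture component $C_{k_2}$. Since the cardinality equals the number of clusters, pigeonhole forces another cluster $C_{k_1}$ to contain at least two elements of $\widetilde{\mathcal{C}^+}$. Let $j^{\star} = \argmax_{j\in C_{k_2}}\sum_{i\in C_{k_2}} S_{i,j}$ and let $j_0$ be the element of $\widetilde{\mathcal{C}^+}\cap C_{k_1}$ whose removal causes the smallest decrease in $F_{k_1}$. Replacing $j_0$ with $j^{\star}$ yields a new set $\mathcal{C}'$ of size $\kappa$, and by the decomposition,
\begin{equation*}
    \Delta F = F_{k_2}(\{j^{\star}\}) - \bigl[F_{k_1}(\widetilde{\mathcal{C}^+}\cap C_{k_1}) - F_{k_1}((\widetilde{\mathcal{C}^+}\cap C_{k_1})\setminus\{j_0\})\bigr].
\end{equation*}
The first term is strictly positive by the remark above. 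The second term, by submodular diminishing returns applied to $F_{k_1}$, is bounded above by the first-representative gain $F_{k_1}(\{j_0\})$ and, due to the careful choice of $j_0$, by the contribution of $j_0$ restricted to those $i \in C_{k_1}$ for which no other retained representative already dominates.

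The main obstacle is to certify $\Delta F > 0$ without imposing extra assumptions on the raw similarity values. I would close this gap by invoking the HDP-HMM mixture construction: segments sharing a mixture component are drawn from a single multivariate Gaussian $\mathcal{N}(\boldsymbol{\mu}_{k,t},\Sigma_{k,t})$, so their pairwise similarities within a cluster are tightly concentrated. Hence removing one redundant representative from $C_{k_1}$ is absorbed by the other retained representatives, while covering a previously empty cluster $C_{k_2}$ contributes a first-order positive value. This yields $\Delta F > 0$, contradicting the assumed optimality of $\widetilde{\mathcal{C}^+}$. Therefore every $\kappa$-sized maximizer of $F$ must contain exactly one segment from each of the $\kappa$ mixture components.
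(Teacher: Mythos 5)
Your proof follows essentially the same route as the paper's: a proof by contradiction in which pigeonhole yields a doubly-covered mixture component and an uncovered one, an exchange of a redundant representative for one in the uncovered component gives a strictly positive first-representative gain, and the loss from dropping the redundant representative is argued to be negligible by appealing to the intra-cluster cohesiveness of the Gaussian components produced by the HDP-HMM (your explicit cluster-wise decomposition of $F$ is a clean formalization of what the paper does implicitly). One small caution: positive definiteness of $\Sigma^{-1}_{z,s}$ guarantees $({\bf x}_j^+)^\top\Sigma^{-1}_{z,s}{\bf x}_j^+>0$ but not that the cross terms $({\bf x}_i^+)^\top\Sigma^{-1}_{z,s}{\bf x}_j^+$ are positive, so the strict positivity of $F_{k_2}(\{j^\star\})$ should be traced to the diagonal term (or to nonnegativity of the learned features) rather than to positive definiteness alone --- though this is the same level of informality present in the paper's own proof, which likewise simply asserts that the new segment "has positive similarity with each segment" in its component.
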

\begin{proof}
The lemma can be proved by following  the definition of the BN-SVP induced pairwise similarity between segments given by \eqref{eq:similarity} and then use proof by contradiction. Assume that at least two segments, say ${\bf x}^{(t)}_i, {\bf x}^{(t)}_j$, are chosen from the same component $t$. Then, there will be at least one component, say $t'$, where no segments are chosen by $\widetilde{\mathcal{C}^+}$. Given the definition of $F(\mathcal{C})$ in \eqref{eq: submodular_function}, for each segment in $t$, either ${\bf x}^{(t)}_i$ or ${\bf x}^{(t)}_j$ could be used to compute the pairwise similarity based on their closeness to that segment. Since the cohesiveness of each component is guaranteed through the BN-SVP process, both ${\bf x}^{(t)}_i$ and ${\bf x}^{(t)}_j$ should be close to the mean of their assigned Gaussian component $\mathcal{N}({\bf x}_t, \Sigma_t)$ to ensure a high likelihood optimized by HDP-HMM. Due to triangle inequality, ${\bf x}^{(t)}_i$ and ${\bf x}^{(t)}_j$ should be close to each other. As a result, we can assume that ${\bf x}^{(t)}_i$ is always chosen to evaluate the pairwise similarity $S_{i,p}$ with each segment ${\bf x}^{(t)}_p$ in component $t$. Next, we replace ${\bf x}^{(t)}_j$ with another segment  ${\bf x}^{(t')}_j$ from component $t'$ to construct another solution set $\overline{\mathcal{C}^+}$. Since ${\bf x}^{(t')}_j$ has positive similarity with each segment in $t'$ and the pairwise similarity between ${\bf x}^{(t)}_j$ and all segments in $t'$ is all zero, we have $F(\overline{\mathcal{C}^+})>F(\widetilde{\mathcal{C}^+})$, which contradicts the assumption that $\widetilde{\mathcal{C}^+}$ maximizes $F(\mathcal{C})$. 
\end{proof}
Since the representative set $\widehat{\mathcal{C}^+}$ is constructed by choosing one segment from each mixture component, it satisfies the necessary condition to be an optimizer of $F(\mathcal{C})$ specified in the above lemma. However, choosing a set of segments with the maximum diversity is not the primary goal and the overall objective function \eqref{eq: sub_cons_avg_topk} includes both the MIL loss and the diversity, which are balanced through $\lambda$. Due to the lack of instance-level labels, choosing a $\lambda$ that optimally balances the MIL loss and the set diversity is challenging. We argue that construction $\widehat{\mathcal{C}^+}$ essentially offers alternative way to set a specific $\lambda$ to balance these two terms. First, since the constraint $|\mathcal{C}^+|\leq {\kappa}$ allows the set to contain less than $\kappa$ segment, $\widehat{\mathcal{C}^+}$ excludes those segments with low prediction scores. This can be viewed as setting a $\lambda$ to increase $-F(\mathcal{C}^+)$ while decreasing the MIL loss $L(\mathcal{C}^+)$. Similarly, instead of choosing the instance with the largest pairwise similarity with all other segments in the same component, we choose a segment with the highest prediction score. Again, this can be viewed as further reducing the $\lambda$ to give more preference to the MIL loss as such segments can further reduce the training MIL loss. Thus, instead of directly setting $\lambda$, which is highly challenging, $\widehat{\mathcal{C}^+}$ is constructed by leveraging both the mixture component assignments and the prediction scores of the segments. This is equivalent to implicitly setting a $\lambda$ to balance the MIL loss and the diversity of the representative set $\widehat{\mathcal{C}^+}$, which completes the proof of the equivalence of these two objective functions.

\paragraph{Proof of the optimality of the greedy algorithm}
%To prove the {\bf second part}, %we have the following objective function with $F({\mathcal C}^+)$ as submodularity function to locate up to $\kappa$ instances using spatial-temporal non-parametric clustering:
%\begin{equation*}
%     \min_{{\bf w}, {\mathcal C}^+\subseteq \mathcal{B}_{pos}, |{\mathcal C}^+|\leq \kappa}L(\mathcal{B}_{pos}, \mathcal{B}_{neg})-\lambda F(\mathcal{C}^+)
%\end{equation*}
We first reformulate \eqref{eq: sub_cons_avg_topk} as a minimization problem $\min_{{\bf w}}g({\bf w})$ with $g({\bf w})$ defined as
\begin{equation}
    g({\bf w}) \delequal \min_{{\mathcal C}^+\subseteq \mathcal{B}_{pos},  |{\mathcal C}^+|\leq \kappa} L(\mathcal{B}_{pos}, \mathcal{B}_{neg})-\lambda F(\mathcal{C}^+)
\end{equation}
The above optimization involves finding a subset ${\mathcal C}^+\subseteq \mathcal{B}_{pos}$ that maximizes $F({\mathcal C}^+)$. This requires enumerating over all ${n\choose \kappa}$ possible subsets, which is expensive when there are large number of segments in a given video. Defining the discrete objective function $G_{{\bf w}}$ where 
\begin{equation}
    G_{{\bf w}}({\mathcal C}^+) \delequal L(\mathcal{B}_{pos}, \mathcal{B}_{neg})-\lambda F(\mathcal{C}^+)
\end{equation}
Since $-G_{{\bf w}}({\mathcal C}^+)$ is monotone non-decreasing submodular, a fast greedy procedure can be used to approximately optimize $G_{{\bf w}}({\mathcal C}^+)$. A typical greedy procedure involves evaluating the similarity between each pair of segments in a video and then choose the segments with the largest overall similarity with the all other segments. We make two important adjustments of this standard greedy process. First, our non-parametric HDP-HMM process follows the clustering based heuristic (Lin and Bilmes 2018) by choosing one segment from each cluster, which avoids evaluating each candidate segment in the video. Different from (Lin and Bilmes 2018), which chooses the data point that is closest to the cluster centroid, we choose the one with the highest output score. Second, our similarity evaluation takes a linear complexity with respect to the bag size by leveraging the temporal neighborhood of the segments. 
%In our case, we leverage the special spatio-temporal non-parametric clustering algorithm to get the $k$-instances with ${\mathcal C}^+$ clusters. This effectively approximates the problem of computing   $g({\bf w})$. More formally, let $\hat{{\mathcal C}^+}\subseteq \mathcal{B}_{pos}$ be the solution obtained using spatio-temporal non-parametric clustering, then the corresponding objective is
%\begin{equation}
%    \hat{g}({\bf w}) \delequal L(\mathcal{B}_{pos}, \mathcal{B}_{neg})-\lambda F(\hat{\mathcal{C}^+})
%\end{equation}
%which is $k$-constrained greedy approximation to submodularity constrained top-$k$ problem. '
%It should be noted that in above equation $\alpha g({\bf w})\leq \hat{g}({\bf w})\leq g({\bf w})$ with $\alpha$ being approximation factor of submodular optimization.
By leveraging the above greedy procedure, we can show that the obtained approximate  solution is guaranteed to be no worse than $(1-e^{-1})$ of the optimal solution according to the standard result from
(Nemhauser et al. 1978), which completes the proof of the second part. %shows that using greedy approach near optimal solution can be obtained with worst-case approximation factor of $\alpha = 1-e^{-1}$. That means obtained solution is guaranteed to be no worse than $\alpha = 1-e^{-1}$ of the optimal solution. This proves the second part of theorem. 

%\end{proof}

\begin{table*}[h!]
 \caption{Video Level Distribution on Different Datasets}
\label{tab: video_level_distribution}  
\vspace{-4mm}
\small
\begin{center}
\begin{tabular}{|c|c|c|c|c|c|c|c|c|}
\hline
\textbf{Split}&\multicolumn{2}{|c|}{\textbf{ShanghaiTech}} &\multicolumn{2}{|c|}{\textbf{UCF-Crime}} &\multicolumn{2}{|c|}{\textbf{UCF-Crime Multimodal}} &\multicolumn{2}{|c|}{\textbf{Avenue}}\\
\cline{2-9} 
\textbf{} & \textbf{\textit{Normal}}& \textbf{\textit{Abnormal}}& \textbf{\textit{Normal}} & \textbf{\textit{Abormal}} & \textbf{\textit{Normal}} & \textbf{\textit{Abnormal}} & \textbf{\textit{Normal}} & \textbf{\textit{Abnormal}}\\
\hline
Train & $175$ & $63$ & $810$ & $800$ & $150$ & $150$ & $13$ & $17$  \\
\hline
Test & $155$ & $44$ & $150$ & $140$ & $30$ & $30$ & $3$ & $4$ \\
\hline
\end{tabular}
\end{center}

\vspace{-6mm}
\end{table*}
\section{Additional Experimental Results}
\label{sec:additional_experiments}
In this section, we first show the detailed network architecture used in our training process. Next, we provide the ablation study demonstrating the impact of hyperparameter $\epsilon$ used in our experimentation. Finally, we provide some additional qualitative analysis justifying the effectiveness of the proposed approach. Further we also show effectiveness of the HDP-HMM technique to discover subscenes of different types in a video through qualitative analysis.
\subsection{Network Architecture}
%Figure~\ref{fig:proposed_diagram} below shows the overall network architecure. 
First, we pass the pre-trained features through the two parallel GCN branches. The upper branch captures the feature similarity between segments and the lower one captures the temporal consistency between segments such that nearby segments will provide similar predictions. The output of the parallel branches are combined and passed through the 5 LSTM layers with 32 hidden units on each. Next, the output is passed through the BatchNorm. Finally, FCN is connected with sigmoid activation to get the final prediction score.
%\begin{figure*}
%    \centering
%    \includegraphics[width = \textwidth]{}
%    \caption{Proposed GCN based Network Architecture}
%    \label{fig:proposed_diagram}
%\end{figure*}

\paragraph{GCN Architecure}
Next, we explain the GCN architecture in detail. Let $\mathbf{A}$ is the $n\times n$ dimensional adjacency matrix where the $(i, j)$ entry in the matrix indicates the similarity between segment $i$ and $j$. Mathematically,
\begin{equation}
    \mathbf{A}(i, j) = k({\bf x}_i, {\bf x}_j) 
\end{equation}
where ${\bf x}_i$ and ${\bf x}_j$ be the D-dimensional representation for $i^{th}$ and $j^{th}$ segments respectively. It should be noted that for the feature similarity branch, we use the RBF kernel with the following form
\begin{equation}
    k({\bf x}_i, {\bf x}_j) = \exp(\frac{|{\bf x}_i-{\bf x}_j|^2}{-2l^2})
\end{equation}
In case of temporal consistency branch, we use the following form between $i^{th}$ and $j^{th}$ segment:
\begin{equation}
            k({\bf x}_i, {\bf x}_j) = \exp(-|i-j|)
        \end{equation}
This drives the temporally nearby segments to have a similar score. Based on the adjacency matrix, following Kipf and Welling \cite{Kipf:2016tc}, the graph-Laplacian with the renormalization trick can be written as
    
    \begin{equation}
        \hat{{\bf A}} = {\bf D}^{-\frac{1}{2}}({\bf A}+{\bf I_n}){\bf D}^{-\frac{1}{2}}
    \end{equation}
In the above equation ${\bf D}_{(i, i)} = \sum_j\left\{{\bf A}+{\bf I_n}\right\}_{(i, j)}$ is the corresponding degree matrix.
The output of the feature similarity graph is computed as:
    \begin{equation}
        {\bf H} = {\bf \hat{A}}{\bf X}{\bf W}
    \end{equation}
where ${\bf W} \in \mathcal{R}^{D\times M}$ is a trainable parameter matrix and ${\bf X}\in \mathcal{R}^{n\times D}$ is the video specific features.  

\subsection{Ablation Study}
\paragraph{Impact of $\epsilon$}
In this subsection, we show the the impact of the error threshold $\epsilon$ on the model performance. It is worth mentioning that $\epsilon$ indicates the percentile we used to determine the threshold so as to exclude the clusters with potentially all normal segments with a high probability. For example, $\epsilon=0.1$ indicates that we first determine the output score  corresponding to the segment that lies in the $10^{th}$ percentile based on scores of all segments sorted in the non-decreasing order. This selected score is used as the threshold. Next, all representative segments with a predicted score below this output threshold are discarded from the representative set $\widehat{\mathcal{C}^+}$.  Figure~\ref{fig: ablation_epsilon} show the performance variation with respect to the different $\epsilon$'s for five different datasets. As can be seen, for a relatively lower $\epsilon$ value (\ie, 20-35\%), the performance is fairly stable for all datasets. This is because, with a low $\epsilon$, the model rejects the segments from a given video with a sufficiently low output score. This way, the chance of including normal segments from abnormal videos is minimized. Further lowering $\epsilon$ may include a good number of normal segments, making the model mis-identify other similar normal segments as anomalies. On the other hand, choosing a very high $\epsilon$ results in the drop in performance. In this case, some potentially abnormal segments may be missed in the loss function and therefore, the model may have less exposure to different types of abnormal frames resulting in the degradation of performance. In sum, as long as we stay in the relatively low range when choosing $\epsilon$ (\eg, 20-35\% gives very stable results), we can get the stable (and nearly optimal) performance.

\begin{figure*}[t!]
\vspace{-2mm}
\centering
\begin{subfigure}{0.19\textwidth}
  \centering
  \includegraphics[angle=90,width=\linewidth]{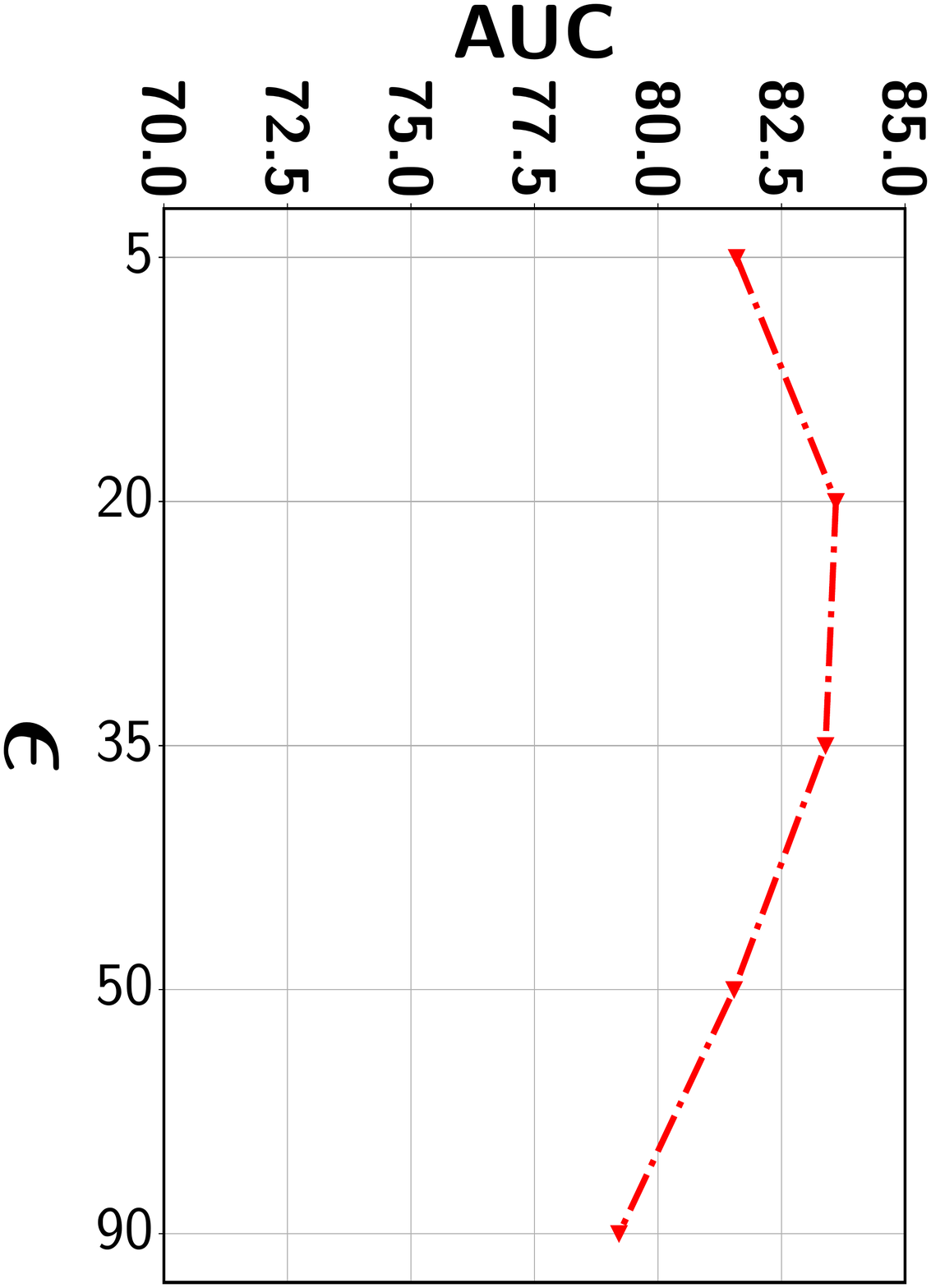}
  \vspace{-4mm}
  \caption{UCF-Crime}
\end{subfigure}
\begin{subfigure}{.19\textwidth}
  \centering
  \includegraphics[angle=90,width=\linewidth]{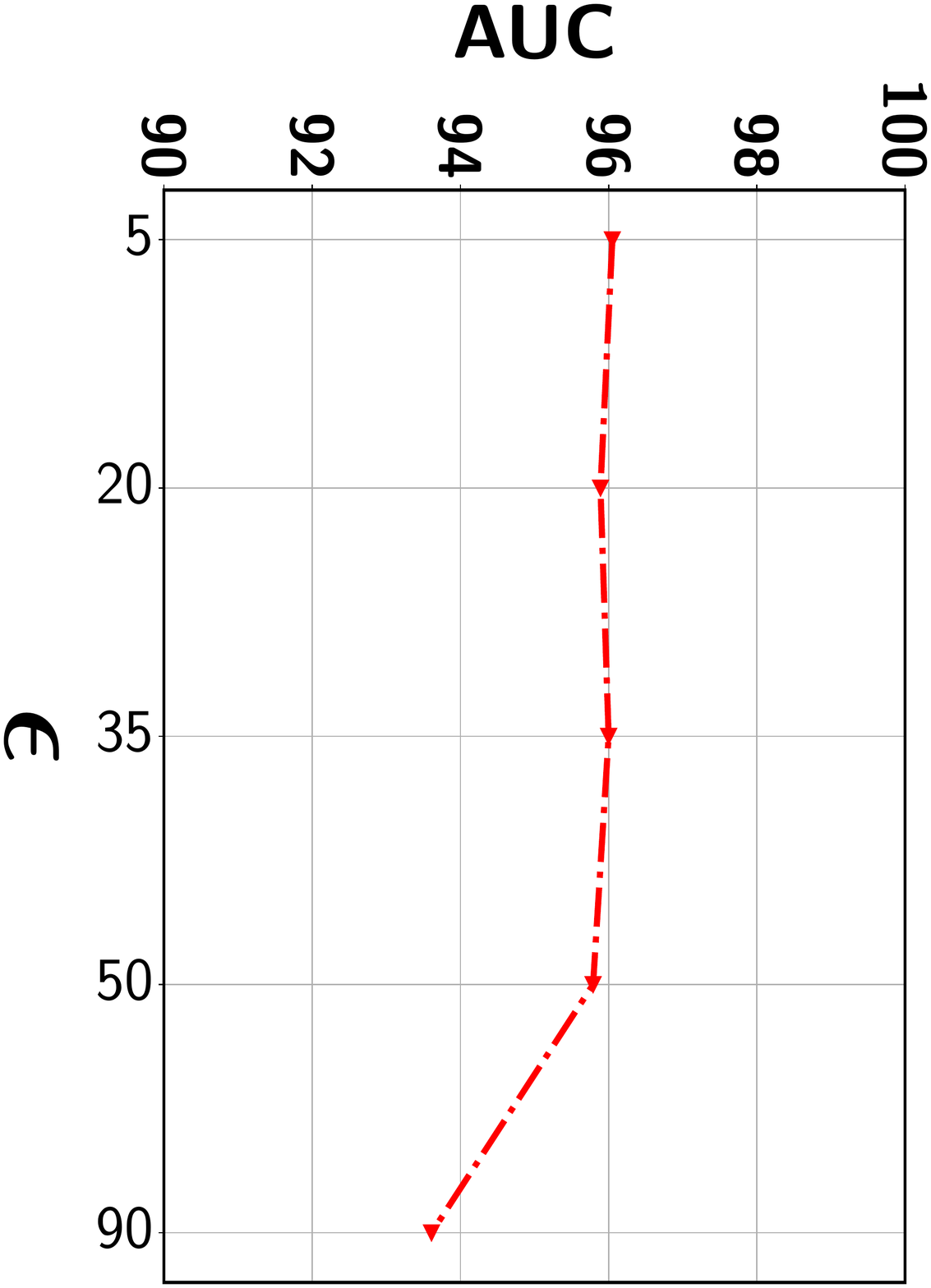}
  \vspace{-4mm}
  \caption{ShanghaiTech}
\end{subfigure}%
\begin{subfigure}{0.19\textwidth}
  \centering
  \includegraphics[angle=90,width=\linewidth]{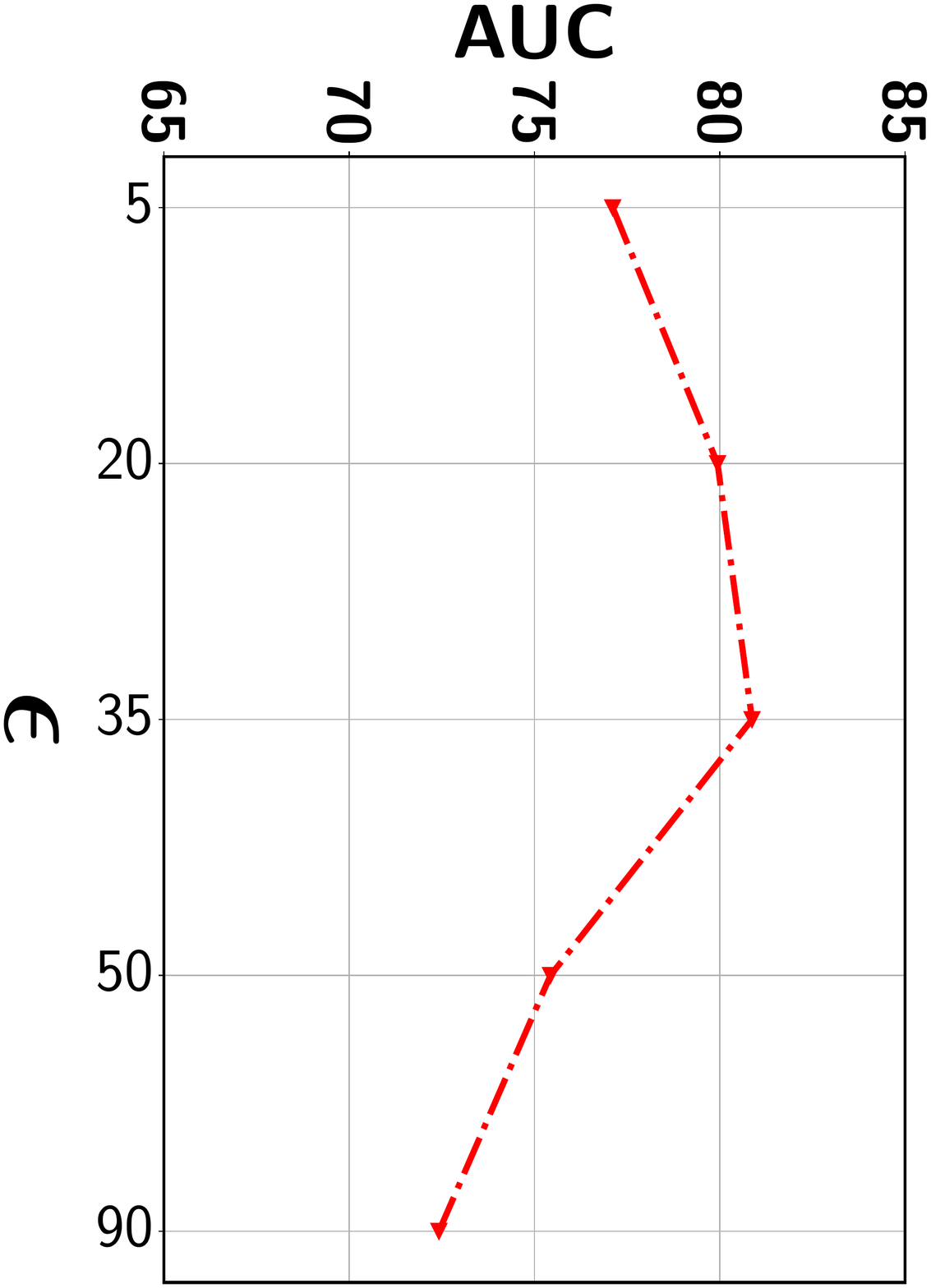}
  \vspace{-4mm}
  \caption{Avenue}
\end{subfigure}
\begin{subfigure}{0.19\textwidth}
  \centering
  \includegraphics[angle=90,width=\linewidth]{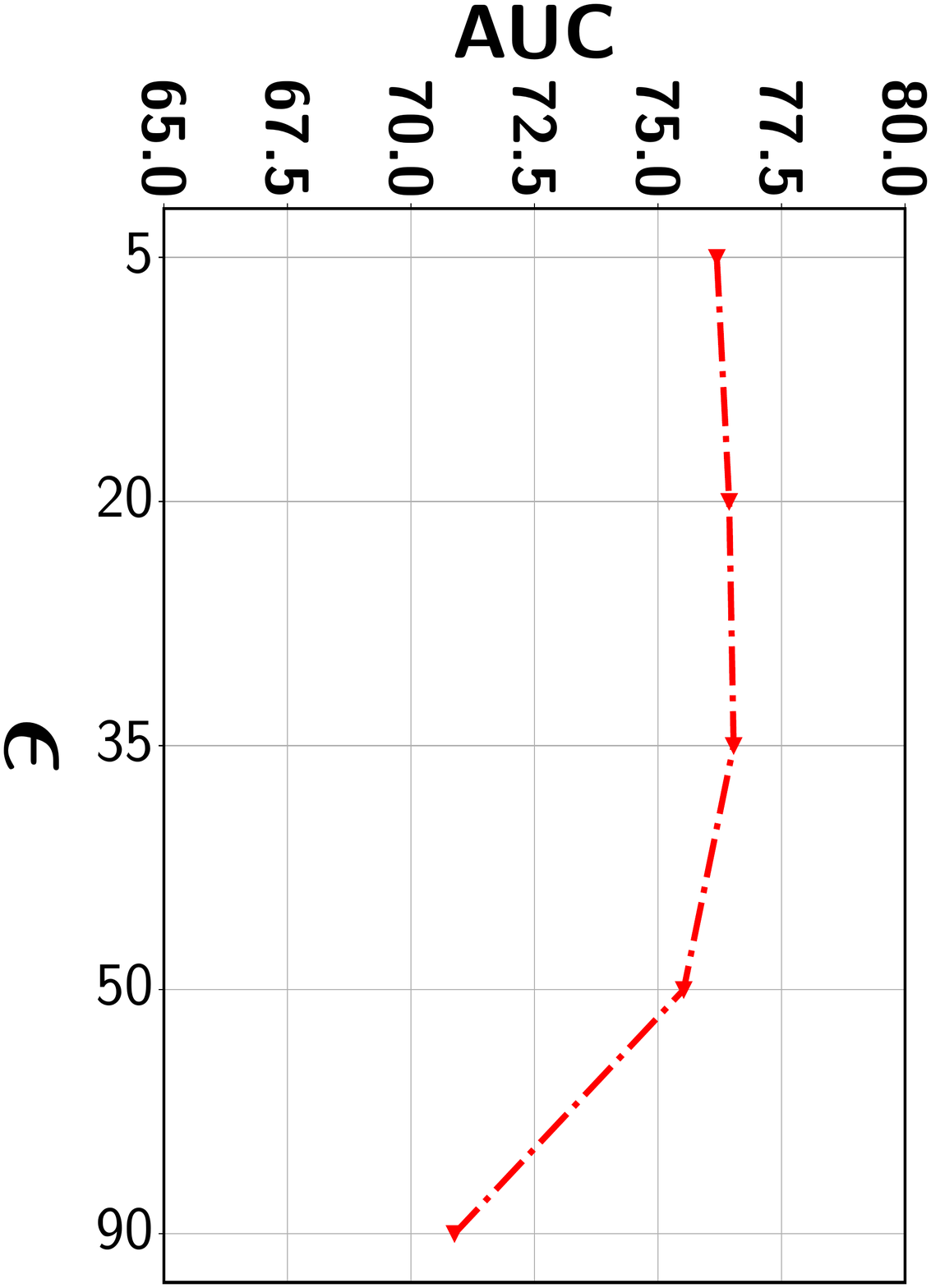}
  \vspace{-4mm}
  \caption{Multimodal (UCF)}
\end{subfigure}
\begin{subfigure}{0.19\textwidth}
  \centering
  \includegraphics[angle=90,width=\linewidth]{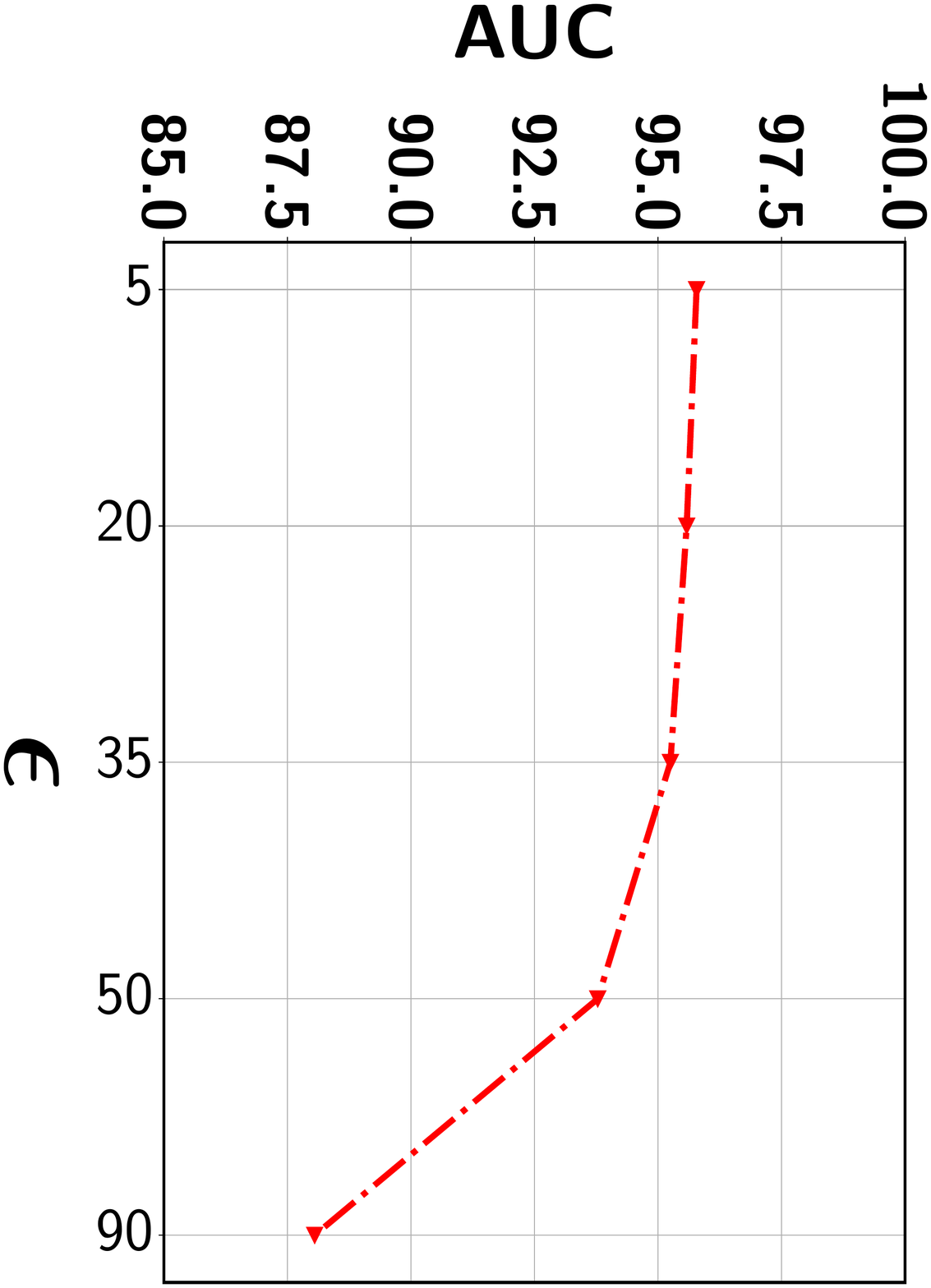}
  \vspace{-4mm}
  \caption{Outlier (ShanghaiTech)}
\end{subfigure}
\vspace{-2mm}
\caption{Performance variation with respect to $\epsilon$} %\comm{Did you smooth the first two?  }} \hitesh{I used maximum of 10 runs for each k value for the AUC performance plot. I did not use any smoothing mechanism}
\vspace{-0mm}
\label{fig: ablation_epsilon}
\end{figure*}
\paragraph{Impact of the constraint $|C|\leq \kappa$} It should be noted that in our approach  $\kappa$ only provides an upper bound on the selected segments and the actual number is determined by the non-parametric model along with the prediction threshold $\epsilon$. This addresses the fundamental issue in the top-$k$ models, in which a fixed $k$ has to be set for all videos. Figures~\ref{fig: performance_kappa} shows that a stable performance can be achieved for a wide range of $\kappa$ values as long as it is not set too small that may exclude some representative abnormal segments.

\begin{figure}[h!]
\vspace{-12mm}
\centering
\begin{subfigure}{0.19\textwidth}
  \centering
  \includegraphics[width=\linewidth]{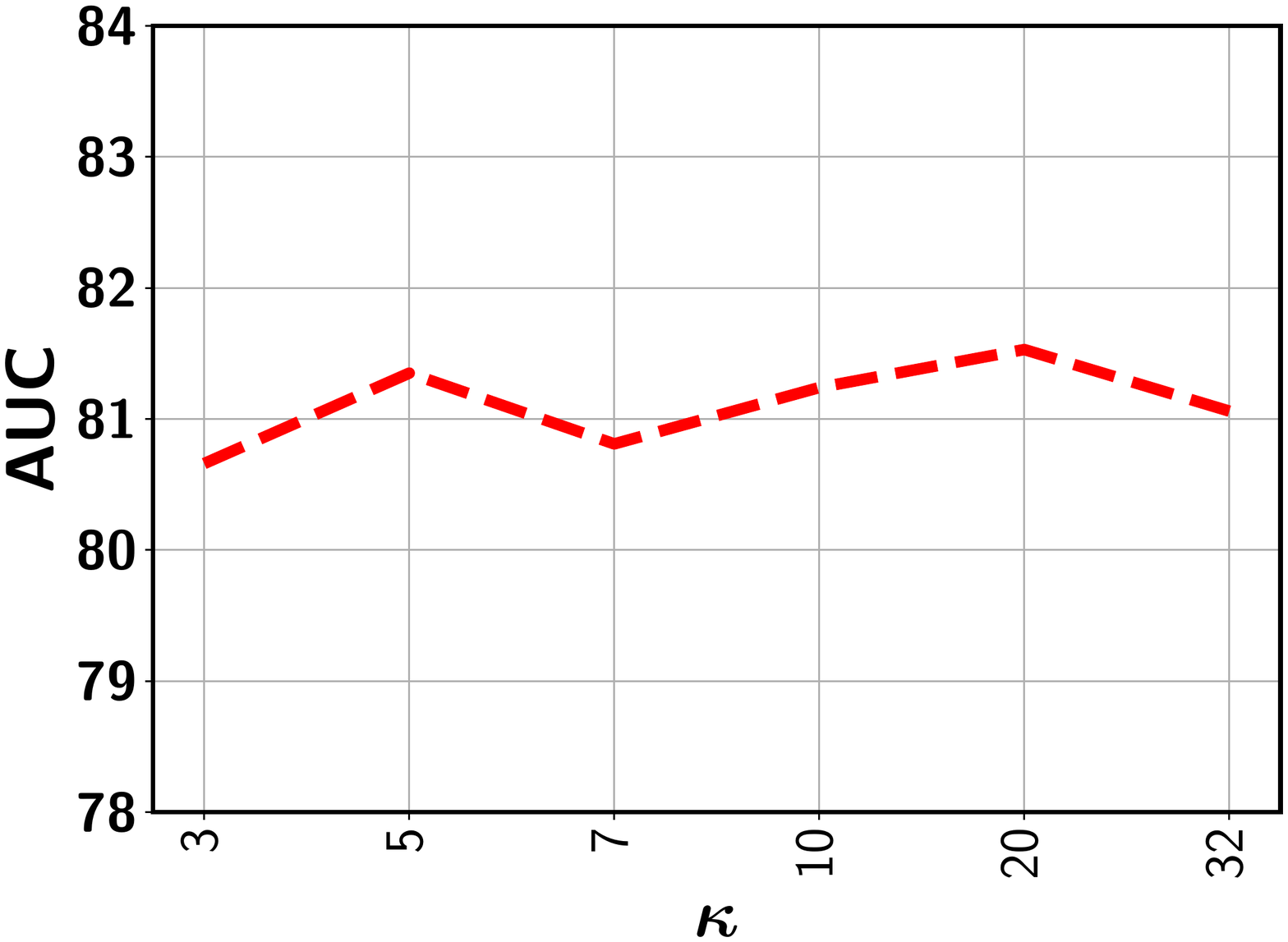}
  \vspace{-12mm}
  \caption{UCF-Crime}
\end{subfigure}%
\begin{subfigure}{0.19\textwidth}
  \centering
  \includegraphics[width=\linewidth]{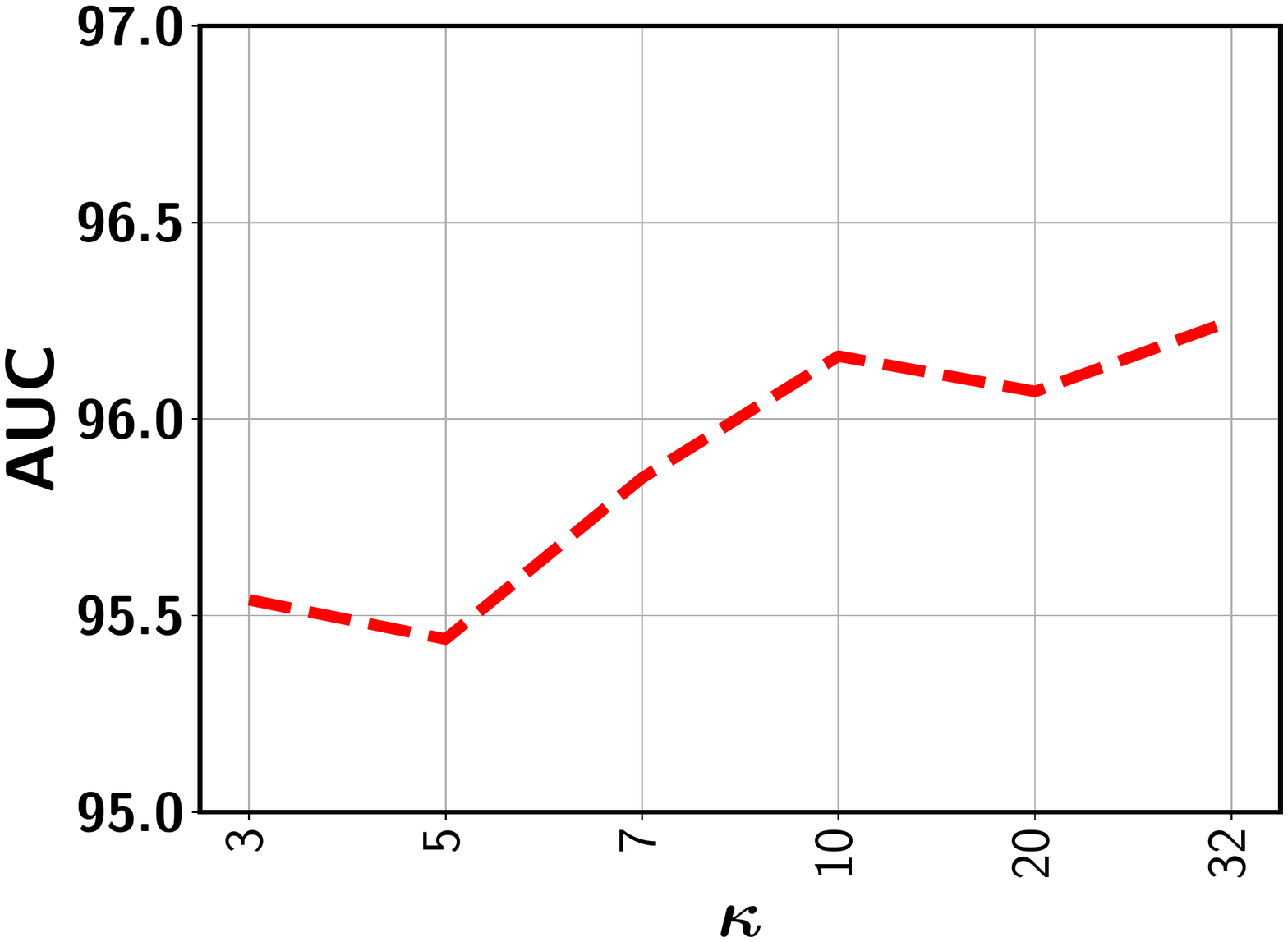}
   \vspace{-12mm}
\caption{ShanghaiTech}
\end{subfigure}%
\begin{subfigure}{.19\textwidth}
  \centering
  \includegraphics[width=\linewidth]{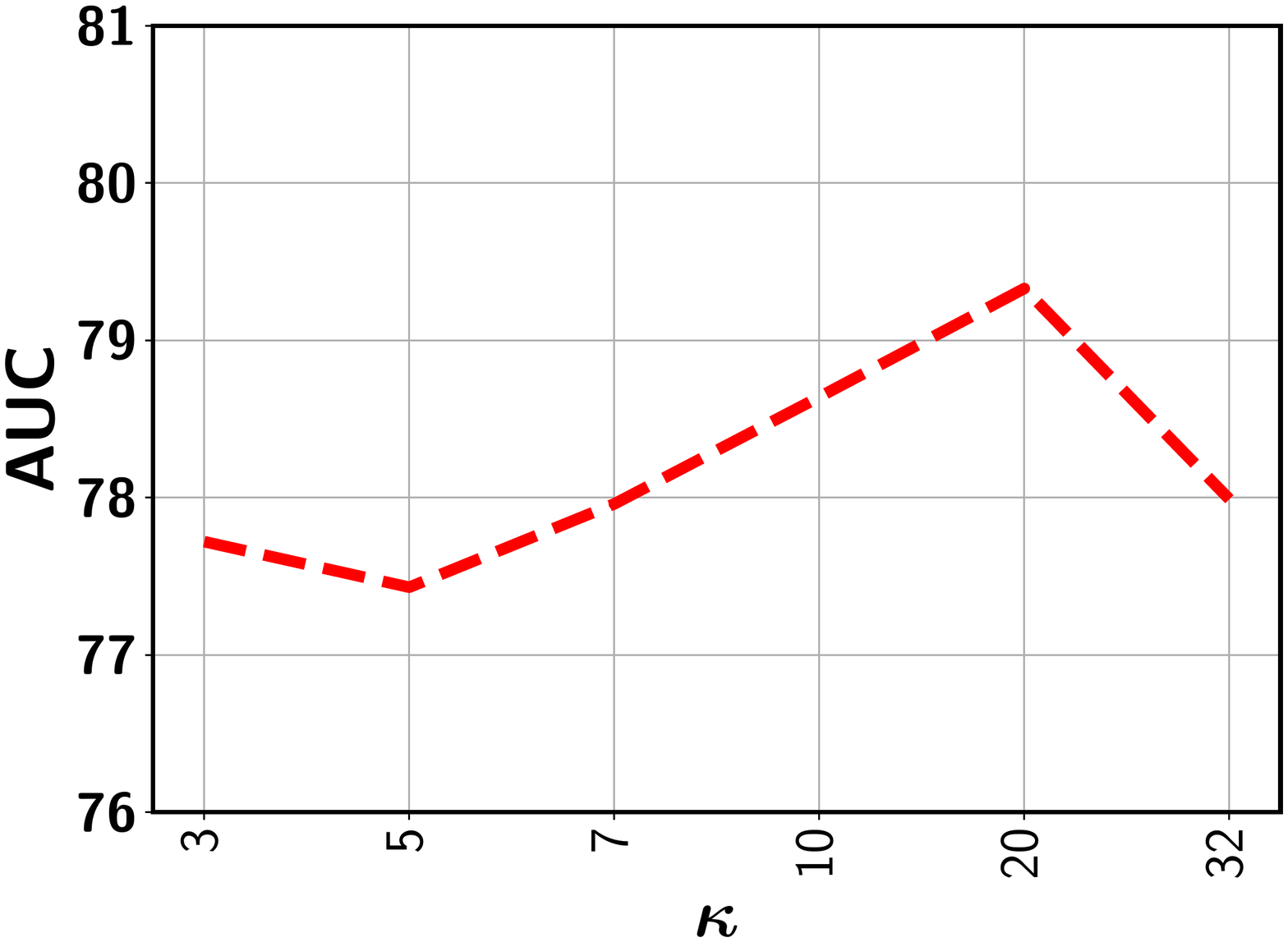}
   \vspace{-12mm}
\caption{Avenue}
\end{subfigure}%
\begin{subfigure}{.19\textwidth}
  \centering
  \includegraphics[width=\linewidth]{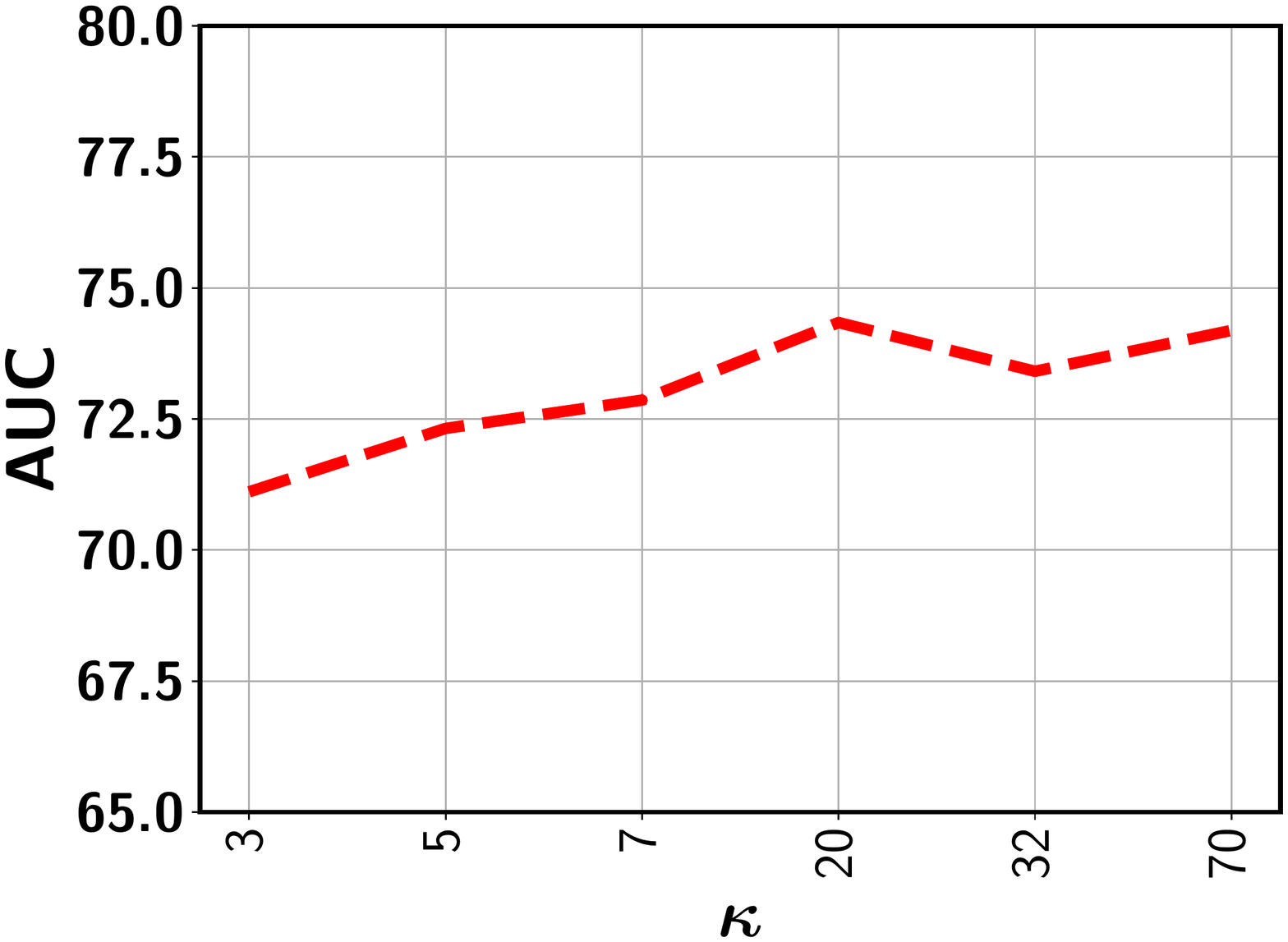}
   \vspace{-12mm}
\caption{Multimodal (UCF)}
\end{subfigure}%
\begin{subfigure}{.19\textwidth}
  \centering
  \includegraphics[width=\linewidth]{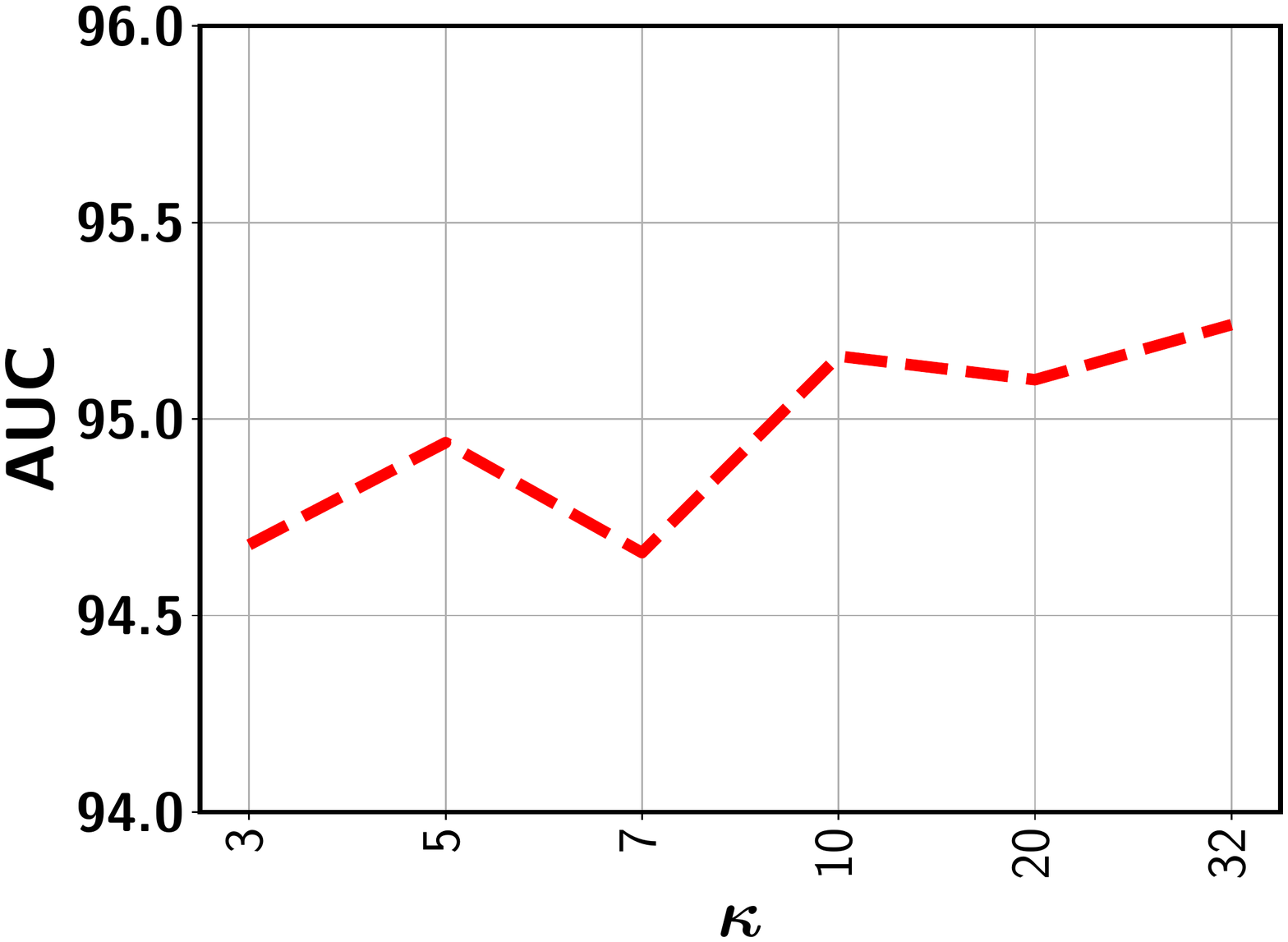}
   \vspace{-12mm}
\caption{Outlier (ShanghaiTech)}
\end{subfigure}

\caption{Performance variation with respect to $\kappa$}
\label{fig: performance_kappa}
\vspace{-5mm}
\end{figure} 

\paragraph{Impact of $\lambda$}
We would like to emphasize that BN-SVP does not require to directly set $\lambda$, which is highly challenging. 
By leveraging the prediction score of instances and their mixture assignments, BN-SVP implicitly sets $\lambda$ to balance MIL loss and the diversity of set $\widehat{\mathcal{C}^+}$. Specifically because of the constraint $|{\mathcal C}^+|\leq \kappa$, we ensure that the set contains no more than $\kappa$ segments. It excludes segments with a low prediction score, which has the effect of decreasing $\lambda$ to reduce the MIL loss. %It is worth noting that to ignore the segments, we have parameter $\epsilon$ which is more intuitive and easy to set compared to $\lambda$. 
Similarly, instead of choosing the instance with the largest pairwise similarity with all other instances within the same mixture assignment, it chooses the instance with the highest prediction score. This can also be viewed as choosing a smaller $\lambda$ to reduce the MIL loss.

\begin{table}[h!]
\caption{Performance (AUROC) with and without augmentation}
\label{tab: ablation_rho}
%\small
\vspace{-4mm}
\begin{center}

\begin{tabular}{|c|c|c|c|c|c|}
\hline
\textbf{Dataset}&  \textbf{UCF-Crime}  & \textbf{Avenue}  &   \textbf{Multimodal} & \textbf{ShanghaiTech} & \textbf{Outlier} \\
\hline

w augmentation & $83.39$  & $80.87$    & $76.53$ & $96.00$ & $95.27$  \\
\hline
w/o augmentation & $80.56$  & $76.71$  & $63.23$ & $94.99$ & $94.52$ \\
\hline

\end{tabular}
\end{center}

\vspace{-6mm}
\end{table}

\paragraph{Impact of Augmentation} We compare the performance (AUROC) with augmentation ($\rho = 1$) and without augmentation ($\rho = 0$).
Table~\ref{tab: ablation_rho} shows the result for different datasets. As can be seen, BN-SVP consistently performs better on all datasets than w/o augmentation. 
Without augmentation, the approach transitions from one state to another state quickly for small visual changes and may not be able to keep the temporal persistence when discovering the scenes and therefore the performance is lower. We have also shown the significance of using augmentation via a qualitative analysis in Appendix~\ref{app:BNP}.

\subsection{Additional Qualitative Analysis}
To show the effectiveness of the proposed approach to handle multimodality, we compare BN-SVP with MMIL using some illustrative examples. Figure~\ref{fig: qualitative_avenue} shows two frames from the TEST06 video in Avenue with different anomaly types. In the first anomaly type, the object is thrown and in the second, a person is walking in the wrong lane. As the first anomaly is more obvious, both BN-SVP and MMIL are able to correctly predict it as abnormal. For the second one, our proposed approach correctly detects it as abnormal while MMIL fails to do that. Due to the submodular diversified loss, BN-SVP is more likely to include even less obvious frames (\eg, Figure \ref{fig: qualitative_avenue} (b)) during the training process and as a result the approach can make a correct prediction. On the other hand, MMIL picks the one with maximum score and therefore more likely to miss those less obvious ones during training process resulting in the mis-identification of similar frames as normal.

\begin{figure}[t!]
\centering
\begin{subfigure}{0.3\textwidth}
  \centering
  \includegraphics[width=.9\linewidth]{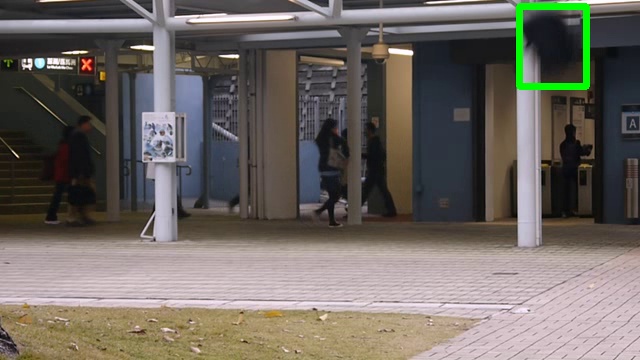}
  %\vspace{-4mm}
  \caption{Frame 1}
\end{subfigure}
\begin{subfigure}{0.3\textwidth}
  \centering
  \includegraphics[width=.9\linewidth]{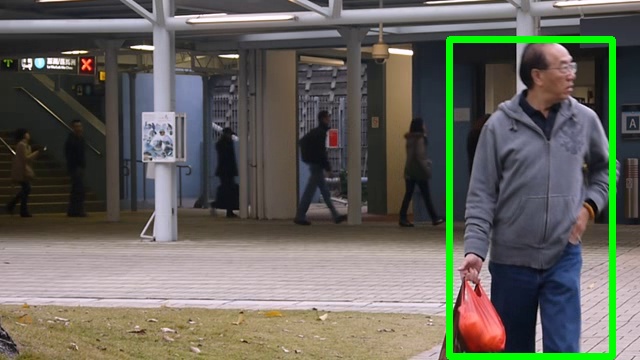}
  %\vspace{-4mm}
  \caption{Frame 2}
\end{subfigure}
\vspace{-2mm}
\caption{Example frames from UCF-Crime Stealing019; (a) Correct BN-SVP, MMIL, (b) Correct BN-SVP, incorrect MMIL }
\label{fig: qualitative_avenue}
\vspace{-0mm}
\end{figure}

\subsection{Effectiveness of Bayesian Non-Parametric Video Partition}\label{app:BNP}
In this section, we present representative frames from the mixture components (\ie sub-scenes) discovered by the proposed Bayesian non-parametric video partition process. The purpose is to demonstrate that semantically coherent segments are automatically grouped into the same mixture components by the proposed BN-SVP. This significantly facilitates the optimization of the submodular function to choose a diverse set of segments and allow some of the most representative segments to participate in the MIL loss. 
%show the effectiveness of the HDP-HMM to capture the different types of scenes through different cluster assignment. 
Figure \ref{fig: effectiveness_hdp_hmm} shows frames randomly selected from different mixture components for video {\sc 01\_0162} from the ShanghaiTech dataset. As shown in Figure \ref{fig: effectiveness_hdp_hmm} (a),  the frame does not contain any person and its associated component (\ie, Component 0) mostly consists of background segments (which are predicted as normal by the model). %Our approach assigns component 0 for the first frame and related similar consecutive frames. 
In Figure \ref{fig: effectiveness_hdp_hmm} (b), there are multiple people in the frame. Furthermore, someone is riding a bike in a wrong lane while a second person is pointing to another group of people. This frame is assigned to a newly created component (\ie, Component 1) since it looks quite different from the previous frames. Also, given the abnormal behavior in the frame, the model predicts it as an anomaly. 
%The model regards as a next scene and therefore assigns the component 1 for this one. As the frame is different from (a) in terms of scene in addition to its type \ie abnormal, it makes sense to assign different component for the frame similar to (b). 
Next shown in Figure \ref{fig: effectiveness_hdp_hmm} (c), as the bike starts to vanish from the camera frame, it looks different from (b) and therefore the model assigns it to a new Component 2. Although (b) and (c)  are both of abnormal types, the latter is much less obvious than the former. Given their distinctions, they have been assigned to different mixture components so both of them could be chosen when optimizing the submodular function to participate in model training. 
%the model can effectively distinguish two and therefore assigns different  components. 
Finally, for Figure \ref{fig: effectiveness_hdp_hmm} (d), the bike completely disappears from the frame and only a group people walking normally. So, it is assigned to Component 3 and the model predicts its as normal.  %Although, (c) and (d) are very similar visually, the model can assign the different component because of its ability to discover subscenes within similar scene types.

\begin{figure*}[t!]
\centering
\begin{subfigure}{0.23\textwidth}
  \centering
  \includegraphics[width=\linewidth]{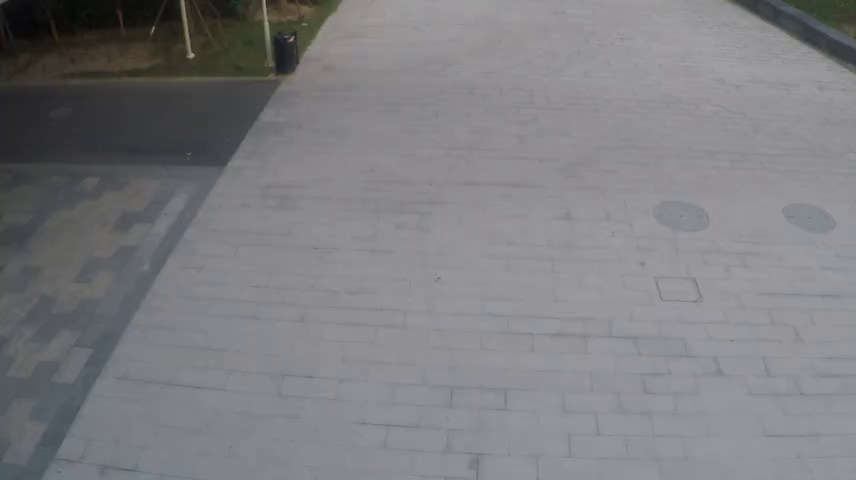}
  \vspace{-4mm}
  \caption{Normal Frame (Component 0)}
\end{subfigure}
\begin{subfigure}{.23\textwidth}
  \centering
  \includegraphics[width=\linewidth]{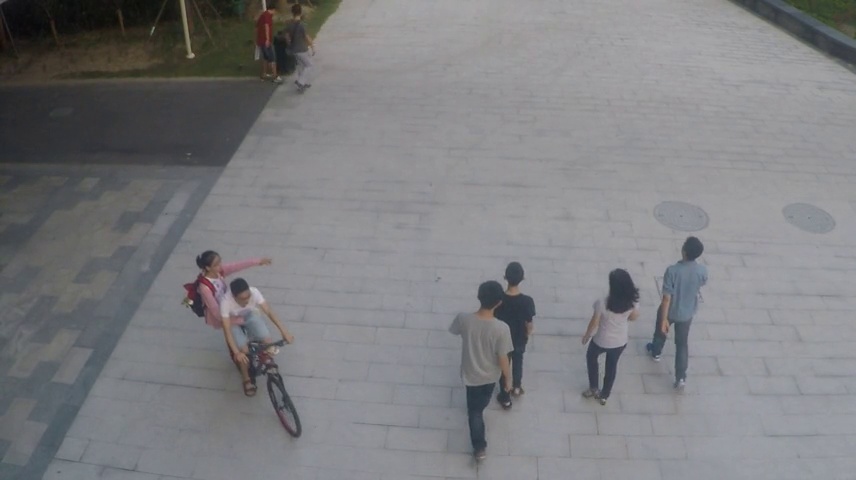}
  \vspace{-4mm}
  \caption{Abnormal Frame (Component 1)}
\end{subfigure}%
\begin{subfigure}{0.23\textwidth}
  \centering
  \includegraphics[width=\linewidth]{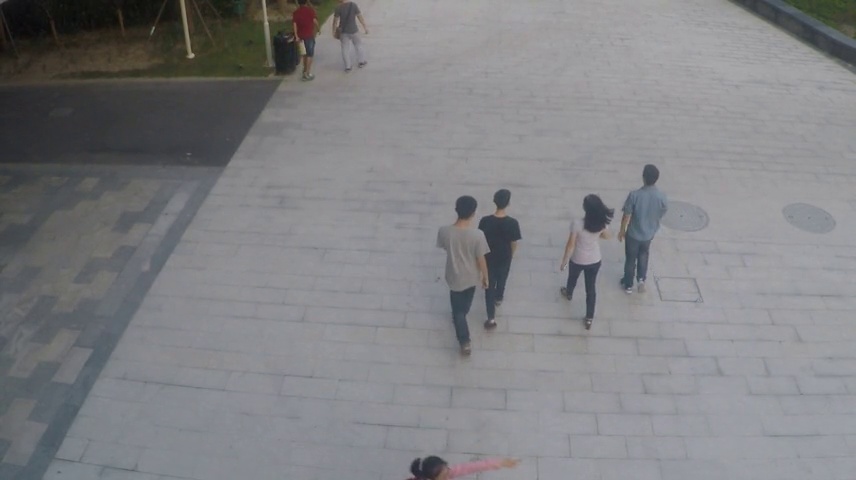}
  \vspace{-4mm}
  \caption{Abnormal Frame (Component 2)}
\end{subfigure}
\begin{subfigure}{0.23\textwidth}
  \centering
  \includegraphics[width=\linewidth]{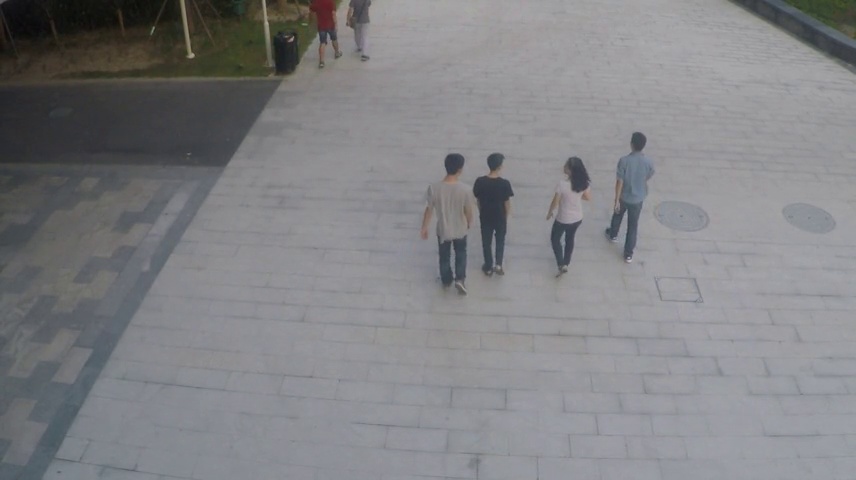}
  \vspace{-4mm}
  \caption{Normal Frame (Component 3)}
\end{subfigure}

\vspace{-2mm}
\caption{Example frames from the discovered mixture components} %\comm{Did you smooth the first two?  }} \hitesh{I used maximum of 10 runs for each k value for the AUC performance plot. I did not use any smoothing mechanism}
\vspace{-0mm}
\label{fig: effectiveness_hdp_hmm}
\end{figure*}

\section{Link to Source Code}
\label{sec:link_source_code}
For the source code, please click the following link: \url{https://github.com/ritmininglab/BN-SVP}.%\href{https://github.com/ritmininglab/BN-SVP}{click here}. 

\end{document}